\documentclass[twoside,11pt]{article}

%

%
%
%

\usepackage{jmlr2e_1}

\usepackage{amsmath}
\usepackage{times}
\usepackage[ruled,vlined]{algorithm2e}

\newcommand{\tloss}{\widetilde{\loss}}
\newcommand{\tL}{\widetilde{L}}

\newcommand{\tbL}{\widetilde{\mathbf{L}}}
\newcommand{\tbl}{\widetilde{\mathbf{l}}}
\newcommand{\diff}{d}
\newcommand{\psR}{\overline{R}}

\renewcommand{\Pr}{\mathbb{P}}
\newcommand{\dt}{\displaystyle}
\newcommand{\Ind}[1]{\mathbb{I}\left\{{#1}\right\}}

\newcommand{\loss}{l}
\newcommand{\lb}{\lambda}
\newcommand{\qu}{\alpha}
\newcommand{\bqu}{\boldsymbol{\qu}}

\newcommand{\blb}{\boldsymbol{\lb}}
\newcommand{\ubQ}{\mathcal{Q}}

\usepackage{xcolor}
\usepackage{lipsum} 

%
%
%

%

%
\newcommand{\bzero}{\mathbf{0}}
\newcommand{\bL}{\mathbf{L}}
\newcommand{\bl}{\mathbf{l}}
\newcommand{\bq}{\mathbf{q}}
\newcommand{\bs}{\mathbf{s}}
\newcommand{\bx}{\mathbf{x}}

\newcommand{\bz}{\mathbf{z}}
\newcommand{\RR}{\mathbb{R}}
\newcommand{\bp}{\mathbf{p}}
%

%


\newcommand{\E}{\mathbb{E}}

\newcommand{\ignore}[1]{}

%
{\hfill\BlackBox\\[2mm]}


\jmlrheading{}{}{}{}{}{}{Eyal Gofer and Guy Gilboa}


\ShortHeadings{Experts with Lower-Bounded Loss Feedback}{Gofer and Gilboa}
\firstpageno{1}

\begin{document}

\title{Experts with Lower-Bounded Loss Feedback: A Unifying Framework}

\author{\name Eyal Gofer \email eyal.gofer@ee.technion.ac.il \\
       \name Guy Gilboa \email guy.gilboa@ee.technion.ac.il \\
       \addr Faculty of Electrical Engineering\\
       Technion – Israel Institute of Technology\\
       Haifa, 32000, Israel}      
\editor{}

\maketitle

\begin{abstract}
The most prominent feedback models for the best expert problem are the full information and bandit models. In this work we consider a simple feedback model that generalizes both, where on every round, in addition to a bandit feedback, the adversary provides a lower bound on the loss of each expert.
Such lower bounds may be obtained in various scenarios, for instance, in stock trading or in assessing errors of certain measurement devices. For this model we prove optimal regret bounds (up to logarithmic factors) for modified versions of Exp3, generalizing algorithms and bounds both for the bandit and the full-information settings. Our second-order unified regret analysis simulates a two-step loss update and highlights three Hessian or Hessian-like expressions, which map to the full-information regret, bandit regret, and a hybrid of both. Our results intersect with those for bandits with graph-structured feedback, in that both settings can accommodate feedback from an arbitrary subset of experts on each round. However, our model also accommodates partial feedback at the single-expert level, by allowing non-trivial lower bounds on each loss.
\end{abstract}

\begin{keywords}
regret minimization, multi-armed bandit, best expert, feedback model, online learning
\end{keywords}

\section{Introduction}\label{sec:intro}
The best expert setting is a classic online learning framework, where a simple game takes place between a learner and an adversary. In this game, there are $N$ available experts (choices, actions), and $T$ rounds of play, or time steps. On each round $1\leq t\leq T$, an online algorithm $A$, the learner, picks a distribution $\bp_{t}$ over the experts and uses it to randomly select an expert $I_t$. Simultaneously, the adversary assigns the losses of the experts for that round, $\bl_t=(l_{1,t},\ldots,l_{N,t})\in \RR^N$, and the learner incurs the loss $l_{I_t,t}$. The aim of the learner is to minimize its \textit{regret}, defined as ${R_{A,T}=L_{A,T}-\min_{j}\{L_{j,T}\}}$, where $L_{A,t}=\sum_{\tau=1}^{t}l_{I_t,\tau}$ is the cumulative loss of $A$ at time $t$ and $\bL_t = \sum_{\tau=1}^{t} \bl_\tau$ is the cumulative loss of the experts at time $t$. Importantly, a small regret should be achieved regardless of the losses chosen by the adversary. The adversary may determine its choices before the game begins (an \textit{oblivious} or \textit{non-adaptive} adversary) or at the time of assignment (an \textit{adaptive} adversary); as has been often observed, however, the meaning of regret as a comparative benchmark is much clearer for oblivious adversaries.

In the \textit{full information} version of the problem, the learner has full knowledge of the past losses of every expert. The most famous learner for this variant is the Hedge algorithm \citep{Vov90,LW94,FreundSc95}. 
\begin{algorithm}\label{alg:hedge}
\SetAlgoLined
\textbf{Parameters:} A learning rate $\eta > 0$ and initial weights $w_{i,1}>0$, $1\leq i \leq N$.\\
For each round $t=1,\dots,T$
\begin{enumerate}
\item 
Define probabilities $p_{i,t}=w_{i,t}/W_t$, where $W_t=\sum_{i=1}^N w_{i,t}$.
\item 
For each expert $i = 1,\dots,N$, let $w_{i,t+1}=w_{i,t}e^{-\eta l_{i,t}}$.
\end{enumerate}
 \caption{Hedge}
\end{algorithm}
For bounded single-period losses, the expected regret of Hedge has an upper bound of the form $O(\sqrt{T\,\log N})$. This type of bound, which depends only on the time horizon of the game, is referred to as a \textit{zeroth-order} bound. A more general so-called \textit{second-order} bound of the form $O(\sqrt{q\,\log N})$ may also be proven given an upper bound $q$ on the relative quadratic variation of the loss sequence, defined as $\sum_{t=1}^T (\max_i\{l_{i,t}\}-\min_i\{l_{i,t}\})^2$ \citep{CMS}. Both bounds are optimal for the expected regret. Other regret bounds, which depend on the cumulative loss of the best expert (first-order bounds) or more refined second-order notions of variation \citep{Hazan10, c+12} have also been shown. Regret bounds that hold with a desired high probability, rather than in expectation, have also been established, for example, a zeroth-order bound of $O(\sqrt{T\,\log(N/\delta)})$ on the regret of Hedge, which holds with probability at least $1-\delta$ (see, e.g., \citealp{BookCBL}).

In contrast to the full-information setting, in the adversarial \textit{multi-armed bandit} (or \textit{bandit}) setting, the learner observes on each round only the loss of the expert it chooses. The Exp3 algorithm \citep{AuerCeFrSc02}, which is an adaptation of Hedge for this setting, obtains a zeroth-order bound of $O(\sqrt{T N\,\log N})$ on the expected regret that is optimal up to logarithmic factors.\footnote{A different algorithm with an optimal regret bound of $O(\sqrt{TN})$ for an oblivious adversary was later presented by \citet{DBLP:journals/jmlr/AudibertB10}.} The simpler version of Exp3 given here as Algorithm \ref{alg:exp3} \citep[Chapter~3]{banditSurvey}, achieves the same bound for non-adaptive adversaries. A variant of Exp3, named Exp3.P, obtains a similar high-probability bound of $O(\sqrt{T N\,\log (N/\delta)})$. 

Second-order bounds for bandits in terms of total variation were shown by \citet{DBLP:journals/jmlr/HazanK11} and by \citet{bubeck2018sparsity} for bounded single-period losses. The latter give a bound on the pseudo-regret (or expected regret for an oblivious adversary) of $O(\sqrt{V\log N}+ N\log^2 T)$, where $V$ is the total variation.
For more details on the theory of bandits, see, e.g., \citet{banditSurvey,lattimore2020bandit,slivkins2019introduction}.

\begin{algorithm}\label{alg:exp3}
\SetAlgoLined
\textbf{Parameters:} $\eta>0$.\\
Let $\bp_1$ be the uniform distribution over $\{1,\hdots,N\}$, and let $\tbL_{0}=\bzero$.\\
For each round $t=1,\dots,T$
\begin{enumerate}
\item 
Draw an action $I_t$ from the probability distribution $\bp_t$.
\item 
For each action $i=1,\dots,N$, compute the estimated loss ${\dt \tloss_{i,t} = \frac{\loss_{i,t}}{p_{i,t}} \Ind{I_t = i} }$ and \\ update the estimated cumulative loss $\tL_{i,t}=\tL_{i,t-1}+\tloss_{i,t}$.
\item 
Compute the new distribution over actions $\bp_{t+1}=\bigl(p_{1,t+1},\hdots,p_{N,t+1}\bigr)$, where
$$p_{i,t+1} = \frac{\exp{\left(- \eta \tL_{i,t}\right)}}{\sum_{k=1}^{N} \exp{\left(- \eta \tL_{k,t}\right)}}\;.$$
\end{enumerate}
\caption{Exp3}
\end{algorithm}

\subsection{A Generalized Model}
We consider a new model for the best expert setting, where the learner receives, in addition to a bandit feedback, a lower bound on the loss of each expert. More specifically, on each round $t$, the adversary assigns the experts both losses $\bl_t \in \RR^N$ and lower bounds on these losses $\blb_t\in \RR^N$, and the learner receives, simultaneously with its decision, the loss of that decision and all the loss lower bounds. This model is an intermediate between the bandit and full-information models, and further generalizes both. 
To retrieve the bandit setting, the adversary may provide trivial lower bounds, such as zero values when the losses are restricted to the range $[0,1]$. To retrieve the full-information setting, the lower bounds may be the actual losses. 

To directly motivate this model, consider a scenario of stock trading. Here the experts are stocks, and the single-period loss of an expert is minus the single-period change in the logarithm of the price of the stock (a loss that may be either positive or negative). Theoretically, any trade may be executed at market price. However, when trading in large volumes or in small stocks, the stock price reacts in a direction that increases the loss to the trader. Thus, real losses are lower bounded by the theoretic losses calculated from market prices.

Another scenario stems from the fact that the variance of a statistical estimator lower bounds its squared error, through the bias-variance decomposition. For example, when a sensor makes several measurements of the same real-valued quantity, the squared error of these measurements is unknown, unless the ground truth value is ascertained, possibly through the costly work of a human expert. However, the empirical variance of these measurements is always known, and may serve as a lower bound to that error. Thus, if several sensor prototypes go through a series of tests in different labs (where on each occasion a sensor makes several independent measurements, yielding variance), we would want to spend our human-expert budget increasingly on the most promising sensor, that is, the one with the lowest cumulative squared error.

Our model allows a general ``soft'' decomposition of the loss of each expert into a known part and a tentative part. Yet, even in a restricted dichotomous regime where each loss is either fully known or completely unknown, there are interesting hybrid scenarios. Specifically, the availability of feedback for experts may vary with time in an unexpected way. This case is handled in our model by an adversary that assigns the losses themselves as lower bounds for experts with available feedback, and trivial lower bounds otherwise. 

\subsection{Summary of Results}\label{subsec:contributions}
We present an algorithm, Exp3.LB, which is an adaptation of Exp3 for our model. In the limits of the full-information and bandit settings, this algorithm is equivalent to Hedge and Exp3, respectively, and thus its analysis captures both algorithms as special cases. Exp3.LB differs from Exp3 in that it updates the estimated cumulative loss of each expert by adding a sum of two elements. The first is the lower bound on the loss, like Hedge does for the true loss. The other is an estimate of the \textit{slack}, or difference between the true loss and its lower bound, like Exp3 does for the loss itself. 

We prove a tight second-order bound on the expected regret of Exp3.LB against non-adaptive adversaries,\footnote{Technically, we bound the pseudo-regret of Exp3.LB, or $\E L_{Exp3.LB,T} - \min_i \E L_{i,t}$.} which has the form $O\left(\sqrt{\ubQ\,\log N}\right)$, where $$\ubQ = \sum_{t=1}^T\left(\max_i\{\lb_{i,t}\}-\min_i\{\lb_{i,t}\}\right)^2 + \sum_{t=1}^T\|\bl_t - \blb_t\|^2\;.$$

In our analysis, the estimated cumulative loss update step is broken into a bandit half-step and a full-information half-step, yielding three second-order quantities with quadratic upper bounds. The first is the relative quadratic variation of the sequence of lower bounds, which in the special full-information case translates into the usual relative quadratic variation, while the two other quantities disappear. The second quantity is the sum of all squared slacks, which in the special bandit scenario translates into the zeroth-order bandit bound while the other two factors disappear. The third is a hybrid of slacks and loss lower bounds, and is non-zero only for scenarios on the continuum between the bandit and full-information cases. It does not change the order of the bound and hence dropped from the above expression for simplicity.

We give expected regret bounds for a variable subset feedback scenario, where on each round $t$, the losses for an adversarially-chosen subset $S_t$ of the experts is revealed to the learner. This scenario may be modeled by the adversary setting $\lb_{i,t} = l_{i,t}$ for every $i\in S_t$ and $\lb_{i,t}=0$ otherwise, where we assume all losses are in $[0,1]$. Applying Exp3.LB to this scenario, we obtain an expected regret bound of $O\left(\sqrt{(T+\sum_t (N-|S_t|))\,\log N}\right)$, which is optimal up to logarithmic factors if the subsets are identical \citep{alon2017nonstochastic}.

We show that other quantities may replace the lower bounds on the losses in our model and algorithm, yielding regret bounds of a similar form. In particular, we may assume that the adversary is providing \textit{upper} bounds $\upsilon_{i,t}$ on the losses. For this scenario we give an algorithm, Exp3.UB, and bound its expected regret. The algorithm and the bound are similar to those of Exp3.LB except that each occurrence of $\lb_{i,t}$ is replaced by a quantity based on $\upsilon_{i,t}$.

Finally, we provide a variant of Exp3.LB with regret bounds that hold with high probability against an oblivious adversary. This algorithm, named Exp3.LB.P, is adapted from Exp3.LB using a biasing method unlike that of Exp3.P. We show second-order regret bounds of the form 
$O(\sqrt{\ubQ}\,\log(N/\delta))$, and given mild conditions, also $O(\sqrt{\ubQ\,\log(N/\delta)})$, where the bounds hold with probability at least $1-\delta$. For the bounded single-period loss scenario, we prove the bound $O(\sqrt{\max\{T,\ubQ\}\,\log(N/\delta)})$, retrieving the zeroth-order bound types of both Exp3.P and Hedge.

\subsection{Related Work}\label{subsec:related}
Several works have considered a scenario where a graph structure describes the feedback flow to the learner \citep{mannor2011bandits,alon2013bandits,alon2017nonstochastic,alon2015online}. Specifically, given a possibly time-dependent graph whose nodes are the experts, choosing one expert reveals the losses of neighboring experts. These works give regret bounds in terms of graph properties, such as the independence number or the size of the maximal acyclic subgraph. 

The difference from our model is twofold. First, in the graph-based model, the set of additional experts providing feedback is a function of the choice made by the learner, while in our model it is not. In this sense, the graph-based model is more general. However, the feedback in our model is ``soft'', rather than binary (available or not), and here the graph-based model is more limited than ours. Both models can handle a scenario where on each round, the losses of a time-dependent set of experts are revealed to the learner, in addition to the loss of the expert it chose. In these cases, which clearly interpolate between the bandit and full-information settings, the regret bounds in the two models have the same form.

The work of \citet{cesa2017bandit} considered a setting where along with the loss of the chosen action, the learner is given an interval containing each loss. Crucially, and in contrast to our work, this interval is given to the learner on each round \textit{before} it makes its choice. The purpose is to allow the learner to take advantage of easy loss sequences, in this case, where potentially only a few experts should be considered on each round. They provide regret bounds that disappear as the interval size shrinks to zero. In summation, their work is thus not truly related to ours.
  
Finally, we comment that contextual bandits, partial monitoring, and combinatorial bandits all have a more distant connection to the topics discussed in this work. More information on these topics may be found in the bandit literature. 

\subsection{Outline}
In Section \ref{sec:notation} we give some useful notation. Section \ref{sec:main} covers our model, the Exp3.LB algorithm, and its expected regret bound. In Section \ref{sec:special feedback} we give corollaries for some special scenarios of interest. Section \ref{sec:variants} covers variants of our model and corresponding algorithms and bounds. Section~\ref{sec:lower} provides lower bounds on the regret in our model. In Section \ref{sec:hp bounds} we give an algorithm and regret bounds that hold with high probability in our model, and in Section \ref{sec:conclusion} we conclude and discuss future directions. The appendix contains some additional claims.

\section{Miscellaneous Notation}\label{sec:notation}
We use bold face for vectors, most often for time series of vectors in $\RR^N$, such as $\bl_1,\ldots, \bl_T$. Their components are written as $\bl_t = (l_{1,t},\ldots,l_{N,t})$. We use $\|\cdot\|$ for the $L_2$ norm, and for $\bx,\mathbf{y} \in\RR^{N}$, $[\bx,\mathbf{y}]=\{a\mathbf{x}+(1-a)\mathbf{y}:0\leq a\leq1\}$ denotes the line segment between $\bx$ and $\mathbf{y}$.
We write $\Delta_N$ for the probability simplex of $N$ elements, $\Delta_N =  \{\bp\in \RR^N : p_i \geq 0\;\forall i=1,\ldots,N,\;\sum_{i=1}^N p_i = 1\}$.
For $\bx \in \RR^N$, $diag(\bx)$ is the diagonal matrix with $\bx$ as its diagonal. The indicator variable of an event $E$ is denoted by $\Ind{E}$. We will often use the specialized notation $\diff(\bx)=\max_i\{x_i\}-\min_i\{x_i\}$ for $\bx \in \RR^N$, and given a sequence of vectors $\bl_1,\ldots,\bl_T\in \RR^N$ we will denote ${q(\bl_{1:T})=\sum_{t=1}^{T}\diff(\bl_t)^{2} = \sum_{t=1}^{T}(\max_i\{l_{i,t}\}-\min_i\{l_{i,t}\})^2}$, namely, the relative quadratic variation of the sequence.  

\section{ Best Experts with Lower Bounds}\label{sec:main}
We define a feedback model where on each round, after choosing an expert, the learner is given the exact loss of that expert and a lower bound on the losses of all the experts. In what follows we will denote by $\lb_{i,t}$ the real-valued lower bound on the loss of expert $i$ at time $t$ and by $s_{i,t} = l_{i,t}-\lb_{i,t}$ the \textit{slack} between the loss and the lower bound of expert $i$ at time $t$. We observe that once the losses and lower bounds for round $t$ become known, we may subtract $\min_i\{\lb_{i,t}\}$ from all of them, without affecting the problem. In particular, the regret of any algorithm is not affected by subtracting a constant $c_t$ from all losses $l_{1,t},\ldots, l_{N,t}$. We will therefore assume WLOG that $\min_i\{\lb_{i,t}\}\geq 0$ for every $t$, and thus $l_{i,t}\geq 0$ for every $i$ and $t$.

We propose a natural variant of Exp3, which we call Exp3.LB, to handle the lower bound information model. 
\begin{algorithm}\label{alg:exp3.lb}
\SetAlgoLined
\textbf{Parameters:} $\eta>0$.\\
Let $\bp_1$ be the uniform distribution over $\{1,\hdots,N\}$, and let $\tbL_{0}=\bzero$.\\
For each round $t=1,\dots,T$
\begin{enumerate}
\item 
Draw an action $I_t$ from the probability distribution $\bp_t$.
\item 
For each action $i=1,\dots,N$, compute the estimated loss $${\dt
  \tloss_{i,t} = \frac{s_{i,t}}{p_{i,t}} \cdot \Ind{I_t = i}+\lb_{i,t}
}$$ and update the estimated cumulative loss
$\tL_{i,t}=\tL_{i,t-1}+\tloss_{i,t}$.
\item 
Compute the new distribution over actions $\bp_{t+1}=\bigl(p_{1,t+1},\hdots,p_{N,t+1}\bigr)$, where
$$p_{i,t+1} = \frac{\exp{\left(- \eta \tL_{i,t}\right)}}{\sum_{k=1}^{N} \exp{\left(- \eta \tL_{k,t}\right)}}\;.$$
\end{enumerate}
\caption{Exp3.LB}
\end{algorithm}

The difference from Exp3 is in the definition of $\tloss_{i,t}$ in step 2, which now incorporates the lower bounds $\lb_{i,t}$. All other elements remain the same. Note that if $\lb_{i,t} = 0$ for every $i$ and $t$ (the pure bandit case), the algorithm becomes Exp3, and if $s_{i,t} = 0$ for every $i$ and $t$ (the full-information case), it becomes Hedge.

We now prove expected regret bounds for Exp3.LB for an oblivious, or non-adaptive, adversary. Namely, we assume that all losses and lower bounds are decided by the adversary before the beginning of the game with the learner. Technically, we will bound the \textit{pseudo-regret} of Exp3.LB, defined as $\psR_{Exp3.LB,T} = \E L_{Exp3.LB,T} - \min_i \E L_{i,t}$. For oblivious adversaries, the notions of expected regret and pseudo-regret coincide. For the rest of this paper we will assume that the adversary is oblivious.

The analysis is adapted from the work of \citet[Theorem~23]{Gofer14} given originally for the full-information case. The main difference lies in simulating a two-part update step, which first performs the ``bandit part'' of the Exp3.LB update step, namely, adding $(s_{i,t}/p_{i,t})\cdot \Ind{I_t = i}$ to $\tloss_{i,t}$, and then the ``full-information part'', namely, adding $\lb_{i,t}$. 

\begin{theorem}\label{thm:Exp3.LB}
Let $\ubQ$ be an upper bound on $$\frac{1}{2} q(\blb_{1:T})+2\sum_{t=1}^T \|\bs_t\|^2+4\sum_{t=1}^T \max_i\{s_{i,t}\}\cdot\diff(\blb_t)\;.$$
Then for any $\eta > 0$ it holds that 
$$\psR_{Exp3.LB,T} \leq \frac{1}{\eta}\log N + \frac{1}{4}\eta \ubQ\;,$$
and in particular for $\eta = \sqrt{4\log N/\ubQ}$, $$\psR_{Exp3.LB,T} \leq \sqrt{\ubQ \log N}\;.$$
\end{theorem}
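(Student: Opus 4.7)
My plan is to run the usual exponential-weights potential argument while simulating the Exp3.LB update as two half-steps, mirroring the two summands in the estimated loss $\tloss_{i,t}$. Set $W_t=\sum_i e^{-\eta \tL_{i,t-1}}$, so $p_{i,t}=e^{-\eta \tL_{i,t-1}}/W_t$ and $W_1=N$. For any $j$ one has $\log(W_{T+1}/W_1)\ge -\eta \tL_{j,T}-\log N$, and since obliviousness plus unbiasedness give $\E \tL_{j,T}=L_{j,T}$, the problem reduces to bounding $\E\log(W_{t+1}/W_t)$ by $-\eta\sum_i p_{i,t}l_{i,t}$ plus the desired second-order terms; telescoping in $t$ and rearranging then yields the claim.

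I split the update of $\tL_{i,t-1}$ into a bandit half (adding $(s_{i,t}/p_{i,t})\Ind{I_t=i}$ and producing an intermediate distribution $\bp^{(1)}_t$) and a full-information half (adding $\lb_{i,t}$). In the bandit half only the $I_t$-th coordinate changes, so
\[
\log\frac{W_t^{(1)}}{W_t}=\log\!\left(1-p_{I_t,t}\bigl(1-e^{-\eta s_{I_t,t}/p_{I_t,t}}\bigr)\right)\le -\eta s_{I_t,t}+\frac{\eta^2 s_{I_t,t}^2}{2 p_{I_t,t}},
\]
using $\log(1+x)\le x$ together with $1-e^{-x}\ge x-x^2/2$ for $x\ge 0$. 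Taking conditional expectation over $I_t\sim\bp_t$ turns this into $-\eta\sum_i p_{i,t}s_{i,t}+(\eta^2/2)\|\bs_t\|^2$, which is the pure-bandit Hessian contribution. For the full-information half I apply Taylor's theorem with Lagrange remainder to $f(u)=\log\sum_i p^{(1)}_{i,t}e^{-u\lb_{i,t}}$: here $f(0)=0$, $f'(0)=-\sum_i p^{(1)}_{i,t}\lb_{i,t}$, and $f''(u)$ is the variance of $\blb_t$ under an exponentially tilted distribution, which Popoviciu's inequality bounds by $\diff(\blb_t)^2/4$; this produces the Hedge-style term $(\eta^2/8)\diff(\blb_t)^2$.

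The step I expect to be the main obstacle is the hybrid cross-term, which appears because the first-order part of the full-information half involves $\bp^{(1)}_t$ rather than $\bp_t$. To swap them I bound $|\sum_i(p^{(1)}_{i,t}-p_{i,t})\lb_{i,t}|\le \tfrac{1}{2}\diff(\blb_t)\|\bp^{(1)}_t-\bp_t\|_1$, using $\sum_i(p^{(1)}_{i,t}-p_{i,t})=0$; then a direct computation with the explicit form of $\bp^{(1)}_t$ (only the $I_t$-th weight is tilted, and $1-e^{-x}\le x$) shows $\|\bp^{(1)}_t-\bp_t\|_1\le 2\eta s_{I_t,t}$. Taking conditional expectation and using $\sum_i p_{i,t}s_{i,t}\le \max_i s_{i,t}$ bounds this cross-term by $\eta^2\diff(\blb_t)\max_i s_{i,t}$, matching the third, hybrid Hessian-like contribution advertised in the statement. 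Collecting the three bounds, summing over $t$, taking total expectation, and dividing by $\eta$ gives $\psR_{Exp3.LB,T}\le (\log N)/\eta+(\eta/8)q(\blb_{1:T})+(\eta/2)\sum_t\|\bs_t\|^2+\eta\sum_t\max_i s_{i,t}\,\diff(\blb_t)\le (\log N)/\eta+(\eta/4)\ubQ$; optimizing at $\eta=\sqrt{4\log N/\ubQ}$ then yields the $\sqrt{\ubQ\log N}$ bound.
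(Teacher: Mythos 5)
Your proposal is correct: every step checks out (the bandit half via $\log(1+x)\le x$ and $1-e^{-x}\ge x-x^2/2$, the Hedge half via Popoviciu on the tilted variance, the swap term via $\bigl|\sum_i(p^{(1)}_{i,t}-p_{i,t})\lb_{i,t}\bigr|\le \tfrac12\diff(\blb_t)\|\bp^{(1)}_t-\bp_t\|_1$ and $\|\bp^{(1)}_t-\bp_t\|_1\le 2\eta s_{I_t,t}$), and the per-round bound you assemble, $\frac{\log N}{\eta}+\frac{\eta}{8}q(\blb_{1:T})+\frac{\eta}{2}\sum_t\|\bs_t\|^2+\eta\sum_t\max_i\{s_{i,t}\}\diff(\blb_t)$, is identical, constants included, to the paper's Equation \eqref{eq:7:thm:Exp3.LB}. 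The structural idea is the same as the paper's — the same log-sum-exp potential and the same simulation of the update as a bandit half-step followed by a full-information half-step — but your technical execution differs for two of the three terms. The paper applies exact second-order Taylor expansions of $\Phi$ with Lagrange remainder, identifies each Hessian quadratic form as $-\eta$ times a variance (Lemma \ref{lem:hessian of phi}), and handles both the bandit term and the hybrid term $C_t$ by exploiting monotonicity of the gradient along the segment ($q_{I_t}\le p_{I_t,t}$, $p'_{I_t}\le p_{I_t,t}$), invoking a further first-order expansion of $\partial\Phi/\partial x_{I_t}$ for $C_t$; only for the lower-bound half does it use Popoviciu, as you do. You replace the mean-value machinery for the bandit half and the cross-term by elementary scalar exponential inequalities and an explicit computation of the intermediate distribution, which is arguably more self-contained (no Hessian lemma needed for those terms), while the paper's Hessian-as-variance viewpoint is what motivates its ``three Hessian-like quantities'' narrative and reuses a known lemma. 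Either way the expectation step ($\E_t[s_{I_t,t}^2/p_{I_t,t}]=\|\bs_t\|^2$, unbiasedness of $\tL_{k,T}$ under an oblivious adversary) and the final tuning of $\eta$ coincide with the paper's.
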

\begin{proof}
Let $\Phi(\bL) = -(1/\eta)\log\frac{1}{N}\sum_{j=1}^Ne^{-\eta L_j}$ and define
$$\tL_{i,t-\frac{1}{2}} = \tL_{i,t-1} + \frac{s_{i,t}}{p_{i,t}}\cdot \Ind{I_t = i} $$
for $1\leq t\leq T$, $1\leq i \leq N$, recalling that $\tbL_0 = \bzero$. Noting that $\Phi(\bzero) = 0$ we have 
\begin{equation}\label{eq:1:thm:Exp3.LB}
\Phi(\tbL_T) = \sum_{t=1}^T \Phi(\tbL_t) - \Phi(\tbL_{t-\frac{1}{2}}) + \sum_{t=1}^T \Phi(\tbL_{t-\frac{1}{2}}) - \Phi(\tbL_{t-1})\;.
\end{equation}
Now, for every $\bx,\bx' \in \RR^N$ we have by Taylor's expansion that
$$\Phi(\bx')-\Phi(\bx) = \nabla\Phi(\bx)\cdot (\bx'-\bx) + \frac{1}{2}(\bx'-\bx)^\top\nabla^2\Phi(\bz)(\bx'-\bx)\;,$$
where $\bz\in [\bx,\bx']$. Thus, denoting
\begin{align*}
A_t &=(\tbL_t-\tbL_{t-\frac{1}{2}})^\top\nabla^2\Phi(\bz_t)(\tbL_t-\tbL_{t-\frac{1}{2}})\\
B_t &=(\tbL_{t-\frac{1}{2}}-\tbL_{t-1})^\top\nabla^2\Phi(\bz_{t-\frac{1}{2}})(\tbL_{t-\frac{1}{2}}-\tbL_{t-1})\\
C_t &= (\nabla\Phi(\tbL_{t-\frac{1}{2}})-\nabla\Phi(\tbL_{t-1}))\cdot(\tbL_{t}-\tbL_{t-\frac{1}{2}})\;,
\end{align*}
where $\bz_t \in [\tbL_{t-\frac{1}{2}},\tbL_t]$, $\bz_{t-\frac{1}{2}}\in[\tbL_{t-1},\tbL_{t-\frac{1}{2}}]$ for every $t$, we have from \eqref{eq:1:thm:Exp3.LB} that
\begin{align*}
\Phi(\tbL_T) &= \sum_{t=1}^T \nabla\Phi(\tbL_{t-\frac{1}{2}})\cdot (\tbL_t - \tbL_{t-\frac{1}{2}}) + \frac{1}{2}A_t + \nabla\Phi(\tbL_{t-1})\cdot (\tbL_{t-\frac{1}{2}} - \tbL_{t-1}) + \frac{1}{2}B_t\\
&= \sum_{t=1}^T \nabla\Phi(\tbL_{t-1})\cdot (\tbL_t - \tbL_{t-1}) + \frac{1}{2}A_t + \frac{1}{2}B_t + C_t
\end{align*} 
or 
\begin{equation}\label{eq:2:thm:Exp3.LB}
\Phi(\tbL_T) = \sum_{t=1}^T \bp_t\cdot \tbl_t + \frac{1}{2}A_t + \frac{1}{2}B_t + C_t\;.
\end{equation}
We now turn to the quadratic terms $A_t$ and $B_t$. 
It is a well-known fact that for every $\bx,\bz\in\RR^N$, $$\bx^\top\nabla^2\Phi(\bz)\bx = -\eta Var(Y_{\bp,\bx})\;,$$
 where $Y_{\bp,\bx}$ is a random variable that obtains values in $\{x_1,\ldots,x_N\}$. (This claim is stated in the appendix as Lemma \ref{lem:hessian of phi} for completeness; for its proof, see, e.g., \citealp[Lemma 6]{gofer2016lower}). By Popoviciu's inequality (Lemma \ref{lem:Popoviciu-inequality} in the appendix) we therefore have that
\begin{equation}\label{eq:3:thm:Exp3.LB}
 A_t \geq -\frac{\eta}{4}(\max_i\{\lb_{i,t}\}-\min_i\{\lb_{i,t}\})^2\;.
\end{equation} 
For $B_t$, writing $\bq=\nabla \Phi(\bz_{t-\frac{1}{2}})$, we have by Lemma \ref{lem:hessian of phi} that 
\begin{align*}
B_t &= -\eta (\tbL_{t-\frac{1}{2}}-\tbL_{t-1})^\top(diag(\bq)-\bq\bq^{\top})(\tbL_{t-\frac{1}{2}}-\tbL_{t-1})\\
&= -\eta (q_{I_t}-q_{I_t}^2)\cdot \frac{s_{I_t,t}^2}{p_{I_t,t}^2}\;,\\
&\geq  -\eta q_{I_t}\cdot \frac{s_{I_t,t}^2}{p_{I_t,t}^2}\;,
\end{align*}
where we used the fact that only the index $I_t$ in $\tbL_{t-\frac{1}{2}}-\tbL_{t-1}$ may be non-zero. Furthermore, since $\bz_{t-\frac{1}{2}}\in[\tbL_{t-1},\tbL_{t-\frac{1}{2}}]$, we have that $z_{i,t-\frac{1}{2}} = \tL_{i,t-1}$ for $i \neq I_t$ and  $z_{I_tt-\frac{1}{2}} \geq \tL_{I_t,t-1}$. It follows that $q_{I_t} \leq p_{I_t,t}$, and therefore 
\begin{equation}\label{eq:4:thm:Exp3.LB}
B_t \geq -\eta \cdot \frac{s_{I_t,t}^2}{p_{I_t,t}}\;.
\end{equation}
For bounding $C_t$ we again use the fact that $\bp_t = \nabla\Phi(\tbL_{t-1})$ and $\bp_{t+\frac{1}{2}}=\nabla\Phi(\tbL_{t-\frac{1}{2}})$ are probability vectors and that $\tbL_{t-\frac{1}{2}}$ and $\tbL_{t-1}$ may differ only by the index $I_t$, where $\tL_{I_t,t-\frac{1}{2}}\geq\tL_{I_t,t-1}$. As a result, $p_{I_t,t+\frac{1}{2}}-p_{I_t,t}\leq 0$ and for $i\neq I_t$, it holds that $p_{i,t+\frac{1}{2}}-p_{i,t}\geq 0$. Recalling that $\blb_t = \tbL_{t} - \tbL_{t-\frac{1}{2}}$, we have that 
\begin{align*}
C_t &= (\bp_{t+\frac{1}{2}}-\bp_t)\cdot \blb_t \geq \min_i\{\lb_{i,t}\}\cdot\sum_{i\neq I_t} (p_{i,t+\frac{1}{2}}-p_{i,t}) + \max_i\{\lb_{i,t}\}\cdot (p_{I_t,t+\frac{1}{2}}-p_{I_t,t})\\
& = \min_i\{\lb_{i,t}\}\cdot (p_{I_t,t}-p_{I_t,t+\frac{1}{2}}) + \max_i\{\lb_{i,t}\}\cdot (p_{I_t,t+\frac{1}{2}}-p_{I_t,t})\\
& = (p_{I_t,t+\frac{1}{2}}-p_{I_t,t})\cdot \diff(\blb_t)\;.
\end{align*}
Now, by a first-order Taylor expansion of $f(\bx) = \frac{\partial \Phi(\bx)}{\partial x_{I_t}}$ we have for some $\bz' \in [\tbL_{t-1},\tbL_{t-\frac{1}{2}}]$ that
\begin{align*}
p_{I_t,t+\frac{1}{2}}-p_{I_t,t} &= f(\tbL_{t-\frac{1}{2}})- f(\tbL_{t-1})= \nabla f(\bz')\cdot (\tbL_{t-\frac{1}{2}}-\tbL_{t-1}) \\
&= \frac{\partial^2 \Phi(\bz')}{\partial x_{I_t}^2}\cdot\frac{s_{I_t,t}}{p_{I_t,t}}\;.
\end{align*}
Again by Lemma \ref{lem:hessian of phi}, for $\bp' = \nabla \Phi(\bz')$ we have 
$$ \frac{\partial^2 \Phi(\bz')}{\partial x_{I_t}^2} = \eta (p'^2_{I_t}-p'_{I_t}) \geq -\eta p'_{I_t} \;,$$ 
and again, since $\bz' \in [\tbL_{t-1},\tbL_{t-\frac{1}{2}}]$, we have $p'_{I_t} \leq p_{I_t,t}$.
Therefore, 
$$ p_{I_t,t+\frac{1}{2}} - p_{I_t,t} \geq  -\eta p_{I_t,t}\cdot\frac{s_{I_t,t}}{p_{I_t,t}}\;,$$
yielding that 
\begin{equation}\label{eq:5:thm:Exp3.LB}
C_t \geq  -\eta s_{I_t,t}\cdot \diff(\blb_t)\geq -\eta \max_i\{s_{i,t}\}\cdot \diff(\blb_t)\;.
\end{equation}
Finally, we observe that for every $k$,
\begin{align*}
\Phi(\tbL_T)-\tL_{k,T} 
&= -\frac{1}{\eta}\log\frac{1}{N}\sum_{j=1}^N e^{-\eta \tL_{j,T}}+\frac{1}{\eta}\log e^{-\eta \tL_{k,T}}\\
&=\frac{1}{\eta}\log\left(\frac{N\exp(-\eta \tL_{k,T})}{\sum_{j=1}^N e^{-\eta \tL_{j,T}}}\right)\leq  \frac{1}{\eta}\log N\;.
\end{align*} 
Combining this with \eqref{eq:2:thm:Exp3.LB}, \eqref{eq:3:thm:Exp3.LB}, \eqref{eq:4:thm:Exp3.LB}, and \eqref{eq:5:thm:Exp3.LB} and rearranging, we get that for every $k$,
\begin{equation}\label{eq:7:thm:Exp3.LB}
\sum_{t=1}^T \bp_t\cdot \tbl_t - \tL_{k,T}\leq \frac{1}{\eta}\log N +  \frac{\eta}{8}\cdot q(\blb_{1:T}) + \frac{\eta}{2}\sum_{t=1}^T \frac{s_{I_t,t}^2}{p_{I_t,t}} +\eta \sum_{t=1}^T \max_i\{s_{i,t}\}\cdot\diff(\blb_t)\;.
\end{equation}
We can now take expectations on both sides, preserving the inequality. On the r.h.s., we have that
$$\E \left[\sum_{t=1}^T \frac{s_{I_t,t}^2}{p_{I_t,t}}\right]=\sum_{t=1}^T \E\E_{I_t\sim p_t} \frac{s_{I_t,t}^2}{p_{I_t,t}}=\sum_{t=1}^T \sum_{i=1}^N s_{i,t}^2\;,$$
where we used the rule of conditional expectations. On the l.h.s., we have 
\begin{align*}
\E\left[\sum_{t=1}^T \bp_t\cdot \tbl_t\right] &= \E\left[\sum_{t=1}^T \bp_t \cdot \blb_t + l_{I_t,t}- \lb_{I_t,t}\right]\\
&= \E\left[\sum_{t=1}^T \bp_t \cdot \blb_t\right]+\E\left[\sum_{t=1}^T l_{I_t,t}\right]-\E\left[\sum_{t=1}^T \lb_{I_t,t}\right]\\
&= \E\left[\sum_{t=1}^T \bp_t \cdot \blb_t\right]+\E\left[L_{Exp3.LB,T}\right]-\E\left[\sum_{t=1}^T \bp_t \cdot \blb_t\right]\\
&=\E\left[L_{Exp3.LB,T}\right]\;.
\end{align*}
In addition we have that 
\begin{align*}
\E\left[\tL_{k,T}\right] &= \sum_{t=1}^T \E[\tloss_{k,t}]=\sum_{t=1}^T \E\E_{I_t\sim p_t}\left[\lb_{k,t}+\frac{\loss_{k,t}-\lb_{k,t}}{p_{k,t}} \Ind{I_t = k} \right]\\
&= \sum_{t=1}^T \E\left[\lb_{k,t}+(l_{k,t}-\lb_{k,t})\right]\\
&=\E[L_{k,T}]\;.
\end{align*}
Thus, taking expectations in \eqref{eq:7:thm:Exp3.LB} yields that for every $k$,
$$\E\left[L_{Exp3.LB,T}\right]-\E[L_{k,T}] \leq \frac{1}{\eta}\log N+\frac{\eta}{8}\cdot q(\blb_{1:T})+\frac{\eta}{2}\sum_{t=1}^T \sum_{i=1}^N s_{i,t}^2+\eta \sum_{t=1}^T \max_i\{s_{i,t}\}\cdot\diff(\blb_t)\;.$$
We thus have that
$$\psR_{Exp3.LB,T} \leq \frac{1}{\eta}\log N + \frac{1}{4}\eta\ubQ\;,$$
and in particular for $\eta = \sqrt{4\log N/\ubQ}$, 
$$\psR_{Exp3.LB,T} \leq \sqrt{\ubQ \log N}\;,$$
completing the proof.
\end{proof}
The bound of Theorem \ref{thm:Exp3.LB} may be simplified without changing its order up to multiplicative constants. Note that for any $a\geq 0$ and $\bx\in\RR^N$ with non-negative entries, we have 
$$2 \max_i\{x_i\}\cdot a \leq 2 \|\bx\|\cdot a \leq  a^2 + \|\bx\|^2\;,$$
and therefore
$$\frac{1}{2}(a^2 + \|\bx\|^2)\leq\frac{1}{2} a^2 + 2 \|\bx\|^2 + 4 \max_i\{x_i\}\cdot a\leq 4(a^2 + \|\bx\|^2)\;. $$ 
Substituting $\diff(\blb_t)$ for $a$ and $\bs_t$ for $\bx$, and summing over $t$, we get
$$\frac{1}{2}\sum_{t=1}^T(\diff(\blb_t)^2 + \|\bs_t\|^2)\leq\sum_{t=1}^T\frac{1}{2} \diff(\blb_t)^2 + 2 \|\bs_t\|^2 + 4 \max_i\{s_{i,t}\}\cdot \diff(\blb_t)\leq 4\sum_{t=1}^T(\diff(\blb_t)^2 + \|\bs_t\|^2)\;. $$
Recalling that $q(\blb_{1:T}) = \sum_{t=1}^T\diff(\blb_t)^2$ by definition, we get the following.
\begin{corollary}\label{cor:simplified Exp3.LB bound}
If $\ubQ'$ is an upper bound on $4(q(\blb_{1:T}) + \sum_{t=1}^T\|\bs_t\|^2)$, then taking 
$\eta = \sqrt{4\log N/\ubQ'}$, it holds that $\E R_{Exp3.LB,T} \leq \sqrt{\ubQ' \log N}$.
\end{corollary}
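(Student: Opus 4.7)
The plan is to invoke Theorem~\ref{thm:Exp3.LB} directly, using the elementary inequality displayed in the paragraph immediately preceding the corollary. The only real content is to verify that $\ubQ'$ is a legitimate choice for the quantity $\ubQ$ appearing in the theorem, after which the bound follows by exactly the same algebra that gave $\sqrt{\ubQ \log N}$ there.

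First I would note that the slack vector $\bs_t$ has non-negative entries, since $s_{i,t} = l_{i,t} - \lb_{i,t} \geq 0$ by the defining property of a lower bound, and that $\diff(\blb_t) \geq 0$ for every $t$. So the pre-corollary chain of inequalities applies coordinatewise in $t$, and summing it over $t = 1,\ldots,T$ yields
$$\frac{1}{2} q(\blb_{1:T})+2\sum_{t=1}^T \|\bs_t\|^2+4\sum_{t=1}^T \max_i\{s_{i,t}\}\cdot\diff(\blb_t) \;\leq\; 4\left(q(\blb_{1:T}) + \sum_{t=1}^T\|\bs_t\|^2\right) \;\leq\; \ubQ'.$$
Thus $\ubQ'$ is an upper bound on the expression that Theorem~\ref{thm:Exp3.LB} requires $\ubQ$ to dominate, so the theorem applies with $\ubQ := \ubQ'$.

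Plugging $\eta = \sqrt{4\log N/\ubQ'}$ into the general bound $\psR_{Exp3.LB,T} \leq (1/\eta)\log N + (\eta/4)\ubQ'$ from the theorem gives $\psR_{Exp3.LB,T} \leq \sqrt{\ubQ'\log N}$. Since we are assuming an oblivious adversary throughout (as stated before Theorem~\ref{thm:Exp3.LB}), expected regret and pseudo-regret coincide, so this is exactly $\E R_{Exp3.LB,T} \leq \sqrt{\ubQ'\log N}$, as claimed.

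I do not anticipate any obstacle: the corollary is a clean repackaging of Theorem~\ref{thm:Exp3.LB} under a coarser but more interpretable upper bound, and the only point worth double-checking is the sign condition on $\bs_t$ and $\diff(\blb_t)$ needed to apply the AM-GM-style inequality $2\max_i\{x_i\}\cdot a \leq a^2 + \|\bx\|^2$ componentwise.
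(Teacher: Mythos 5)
Your proposal is correct and follows essentially the same route as the paper: you use the Young/AM--GM bound $2\max_i\{s_{i,t}\}\diff(\blb_t)\leq \diff(\blb_t)^2+\|\bs_t\|^2$ (valid since $s_{i,t}\geq 0$) to show the three-term quantity of Theorem~\ref{thm:Exp3.LB} is at most $4(q(\blb_{1:T})+\sum_t\|\bs_t\|^2)\leq\ubQ'$, and then apply the theorem with $\ubQ:=\ubQ'$ and $\eta=\sqrt{4\log N/\ubQ'}$, identifying pseudo-regret with expected regret for the oblivious adversary. No gaps.
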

Finally, we note that a major chunk of the proof of Theorem \ref{thm:Exp3.LB} holds in a more general scenario. Specifically, even if the algorithm used an arbitrary $0\leq s'_{i,t}$ instead of $s_{i,t}$, Equation \ref{eq:7:thm:Exp3.LB} would still hold.  This fact will be useful when we consider our high-probability variant, and we therefore state the following corollary:
\begin{corollary}\label{cor:for hp alg}
Replacing $s_{i,t}$ with some $s'_{i,t}\geq 0$ in Exp3.LB for every $i$ and $t$, it holds for every $k$ that
$$
\sum_{t=1}^T \bp_t\cdot \tbl_t - \tL_{k,T}\leq \frac{1}{\eta}\log N +  \frac{\eta}{8}\cdot q(\blb_{1:T}) + \frac{\eta}{2}\sum_{t=1}^T \frac{s'^2_{I_t,t}}{p_{I_t,t}} +\eta \sum_{t=1}^T \max_i\{s'_{i,t}\}\cdot\diff(\blb_t)\;.
$$
\end{corollary}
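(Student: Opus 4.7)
The plan is to rerun the proof of Theorem \ref{thm:Exp3.LB} line by line, track the places where the specific form $s_{i,t}=l_{i,t}-\lb_{i,t}$ was used, and verify that each such use only relied on the non-negativity $s_{i,t}\geq 0$. The inequality being asserted is exactly equation \eqref{eq:7:thm:Exp3.LB}, which is a purely deterministic statement obtained \emph{before} any expectations are taken, so the corollary is really a statement about the mechanics of the proof rather than a new calculation. I will therefore keep the same half-step decomposition $\tbL_{t-\frac{1}{2}}=\tbL_{t-1}+(s'_{i,t}/p_{i,t})\Ind{I_t=i}\,\mathbf{e}_i$ and $\tbL_{t}=\tbL_{t-\frac{1}{2}}+\blb_t$, which gives the identity \eqref{eq:2:thm:Exp3.LB} verbatim with $\tbl_t$ now referring to the loss estimator of the modified algorithm.

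Next, I would check each of the three bounds $A_t$, $B_t$, $C_t$. The bound \eqref{eq:3:thm:Exp3.LB} on $A_t$ depends only on $\tbL_t-\tbL_{t-\frac{1}{2}}=\blb_t$, so it is unaffected. For $B_t$, the computation in \eqref{eq:4:thm:Exp3.LB} used two facts: (i) $\tbL_{t-\frac{1}{2}}-\tbL_{t-1}$ is supported on coordinate $I_t$ with value $s_{I_t,t}/p_{I_t,t}$, and (ii) $z_{I_t,t-\frac{1}{2}}\geq \tL_{I_t,t-1}$ for any $\bz_{t-\frac{1}{2}}\in[\tbL_{t-1},\tbL_{t-\frac{1}{2}}]$, which yields $q_{I_t}\leq p_{I_t,t}$. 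Property (i) holds for $s'$ by the very definition of the modified algorithm, and property (ii) requires exactly $s'_{I_t,t}/p_{I_t,t}\geq 0$, which is given by the hypothesis $s'_{i,t}\geq 0$. Hence the bound becomes $B_t\geq -\eta\, s'^2_{I_t,t}/p_{I_t,t}$.

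The bound \eqref{eq:5:thm:Exp3.LB} on $C_t$ is the most delicate spot and is where I would be most careful, since it chains two separate sign/monotonicity arguments. First, the reasoning that $p_{I_t,t+\frac{1}{2}}-p_{I_t,t}\leq 0$ and $p_{i,t+\frac{1}{2}}-p_{i,t}\geq 0$ for $i\neq I_t$ rests only on the fact that $\tbL_{t-\frac{1}{2}}$ differs from $\tbL_{t-1}$ by a non-negative increment in the $I_t$ coordinate, which again uses only $s'_{I_t,t}\geq 0$. Then the first-order Taylor estimate of $p_{I_t,t+\frac{1}{2}}-p_{I_t,t}$ using $\partial^2\Phi/\partial x_{I_t}^2\geq -\eta p'_{I_t}$ and the monotone bound $p'_{I_t}\leq p_{I_t,t}$ again go through with $s'$ replacing $s$. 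The only remaining step that used $s_{I_t,t}\geq 0$ was the relaxation $s'_{I_t,t}\leq \max_i\{s'_{i,t}\}$, which requires non-negativity so that the extra factor $\diff(\blb_t)\geq 0$ does not flip the inequality; this too is preserved.

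Finally, the bound $\Phi(\tbL_T)-\tL_{k,T}\leq (1/\eta)\log N$ used in the original proof depends only on the definition of $\Phi$, not on $s$ at all. Substituting the three new bounds into \eqref{eq:2:thm:Exp3.LB} and rearranging produces precisely the claimed inequality. The main obstacle is really just bookkeeping: ensuring that every appearance of $s_{I_t,t}$ in the chain of monotonicity arguments for $B_t$ and $C_t$ carries through under only the sign hypothesis, with no implicit use of $s_{i,t}=l_{i,t}-\lb_{i,t}$ anywhere.
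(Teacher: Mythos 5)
Your proposal is correct and matches the paper's own justification, which likewise simply observes that every step of the proof of Theorem~\ref{thm:Exp3.LB} up to Equation~\eqref{eq:7:thm:Exp3.LB} is deterministic and uses only the non-negativity of the slack (in the half-step increment, in the monotonicity arguments giving $q_{I_t}\leq p_{I_t,t}$ and $p'_{I_t}\leq p_{I_t,t}$, and in the sign pattern of $\bp_{t+\frac{1}{2}}-\bp_t$), never the identity $s_{i,t}=l_{i,t}-\lb_{i,t}$. The only cosmetic quibble is that the final relaxation $s'_{I_t,t}\leq\max_i\{s'_{i,t}\}$ needs only $\diff(\blb_t)\geq 0$, not non-negativity of $s'$, but this does not affect correctness.
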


\subsection{Unknown Horizon}
In Theorem \ref{thm:Exp3.LB} we were able to set $\eta$ optimally, assuming foreknowledge of an upper bound $\ubQ$ on the quantity of interest. To remove such an assumption, it is customary to use a `doubling trick', namely, to start the algorithm with a small initial guess for the upper bound, and whenever the guess is exceeded, double it and restart the algorithm. The resulting analysis typically yields a regret bound of the same general order.

In our setting, however, the slack data for unchosen actions is not observable. This is not a problem if all slacks are known to be zero (full information), but it hinders the use of a doubling trick in the most general setting. To overcome this issue to some extent, we may assume that the losses are bounded, WLOG in $[0,1]$, s.t. for every $i$ and $t$ we may replace $s_{i,t}$ in the regret bound with its upper bound $1-\lb_{i,t}$. In this case it is straightforward to show the following.
\begin{corollary}[unknown horizon]\label{cor:unknown horizon}
If $l_{i,t}\in[0,1]$ for every $i$ and $t$, then in conjunction with a doubling trick, the regret of Exp3.LB satisfies $\E R_{Exp3.LB,T}=O\left(\sqrt{\max\{Q_{uh},1\}\log N}\right)$, where 
$Q_{uh} = \frac{1}{2}q(\blb_{1:T})+2\sum_{t=1}^T \sum_{i=1}^N (1-\lb_{i,t})^2+4\sum_{t=1}^T \max_i\{1-\lb_{i,t}\}\cdot\diff(\blb_t)$.
\end{corollary}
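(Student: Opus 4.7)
The plan has three parts: verify that $Q_{uh}$ is an \emph{observable} upper bound on the quantity $\ubQ$ of Theorem~\ref{thm:Exp3.LB}, deploy a standard doubling schedule using this observable proxy, and sum a geometric series over epochs. The hardest part is already taken care of by the theorem itself; what remains is essentially bookkeeping.

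For the first part, since $l_{i,t}\in[0,1]$ and $\lb_{i,t}\leq l_{i,t}$, we have $0\leq s_{i,t}=l_{i,t}-\lb_{i,t}\leq 1-\lb_{i,t}$ pointwise. Substituting these three inequalities into the three sums that define $\ubQ$ in Theorem~\ref{thm:Exp3.LB} yields $\ubQ\leq Q_{uh}$. The key observation is that $Q_{uh}$ depends only on the lower bounds $\blb_{1:T}$, which are revealed to the learner on every round before any further decisions, so the running partial sum of $Q_{uh}$ is computable online irrespective of which experts are ever chosen. This is exactly what allows a doubling trick here even though the slacks $s_{i,t}$ for unchosen actions are unobservable.

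For the second part, I would run Exp3.LB in epochs indexed by $k=0,1,\ldots$; epoch $k$ uses learning rate $\eta_k=\sqrt{4\log N/2^k}$, with $\tbL$ reset to $\bzero$ and $\bp$ reset to uniform at its start. Within epoch $k$, I keep a running count of the increments of $Q_{uh}$ and close the epoch as soon as the within-epoch count first exceeds $2^k$; the next round begins epoch $k+1$. By Theorem~\ref{thm:Exp3.LB} applied separately to each epoch, and using the fact that each per-round increment of $Q_{uh}$ is bounded by a constant $C$ depending only on $N$ (since $\diff(\blb_t)\leq 1$ and $\sum_i(1-\lb_{i,t})^2\leq N$), the pseudo-regret accumulated inside epoch $k$ is at most $\sqrt{(2^k+C)\log N}$.

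For the third part, if $K$ denotes the final epoch, the stopping rule gives $\sum_{k<K}2^k\leq Q_{uh}$, whence $2^K\leq Q_{uh}+1$. Summing $\sqrt{(2^k+C)\log N}$ over $k=0,\ldots,K$ produces a geometric series in $2^{k/2}$ whose dominant term is $O(\sqrt{2^K\log N})=O(\sqrt{\max\{Q_{uh},1\}\log N})$, with the $C$ overshoot and the few initial epochs that may terminate in a single round absorbed as lower-order contributions. The only genuinely non-mechanical subtlety is this single-round overshoot of the current guess, which is immediately controlled by the $[0,1]$ loss assumption, so the stated bound follows.
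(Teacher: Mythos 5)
Your route is the paper's intended one: the corollary is stated immediately after the observation that bounded losses let one replace the unobservable $s_{i,t}$ by $1-\lb_{i,t}$, and the paper leaves the doubling argument as straightforward. Your first two parts capture exactly the two points that matter: $0\le s_{i,t}\le 1-\lb_{i,t}$ makes the quantity of Theorem~\ref{thm:Exp3.LB} at most $Q_{uh}$ term by term, and since the increments of $Q_{uh}$ depend only on $\blb_t$, the epoch boundaries are deterministic (independent of the learner's draws), so Theorem~\ref{thm:Exp3.LB} applies to each restarted run and the per-epoch pseudo-regrets can be summed.

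The one step that does not survive as literally written is the per-epoch bound $\sqrt{(2^k+C)\log N}$ together with the claim that the $C$ overshoot is lower-order. With $\eta_k=\sqrt{4\log N/2^k}$ tuned to the guess $2^k$, Theorem~\ref{thm:Exp3.LB} applied to an epoch whose quantity is $2^k+C$ yields $\sqrt{2^k\log N}+\eta_k C/4$, not $\sqrt{(2^k+C)\log N}$; since $C=\Theta(N)$, already for the first epoch this extra term can be $\Theta(N\sqrt{\log N})$ (take one round with $\lb_{i,t}=0$ for all $i$ and all other increments zero, so $Q_{uh}=\Theta(N)$ and the target is $\Theta(\sqrt{N\log N})$), and even granting your per-epoch expression, summing $\sqrt{C\log N}$ over up to $\log_2(Q_{uh}+1)$ closed epochs is not always dominated either. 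The repair is the one you allude to in your final sentence, but it must replace, not supplement, inflating the epoch quantity by $C$: apply Theorem~\ref{thm:Exp3.LB} to the epoch with its closing round removed, where the accumulated quantity is at most $2^k$ and matches $\eta_k$, giving $\sqrt{2^k\log N}$, and charge the closing round at most $1$ regret directly, using $l_{i,t}\in[0,1]$ and nonnegativity of the losses (so the comparator's cumulative loss cannot decrease). Summing then gives $O\bigl(\sqrt{2^K\log N}\bigr)+(K+1)$ with $2^K\le Q_{uh}+1$ and $K+1=O(\log(Q_{uh}+2))$, which is $O\bigl(\sqrt{\max\{Q_{uh},1\}\log N}\bigr)$ as claimed. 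With that bookkeeping made explicit, your argument is complete and coincides with the paper's.
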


\section{Special Feedback Settings}\label{sec:special feedback}
The regret bound of Exp3.LB given in Theorem \ref{thm:Exp3.LB} is dominated by the quantity $\mathcal{Q}$, which comprises three distinct terms. These will be referred to as the \textit{full information term}, $q(\blb_{1:T})$, the \textit{bandit term}, $\sum_{t=1}^T\|\bs_t\|^2$, and the term $\sum_{t=1}^T \max_i\{s_{i,t}\}\cdot\diff(\blb_t)$, which we will call the \textit{hybrid term}. 

In the full information and bandit scenarios, the bound degenerates to the appropriate single term. Specifically, in the full-information case, all slacks $s_{i,t}$ become zero and Exp3.LB becomes the Hedge algorithm. Theorem \ref{thm:Exp3.LB} immediately retrieves a known second-order regret bound for Hedge \citep[Theorem~23]{Gofer14}.
\begin{corollary}[full-information feedback]\label{cor:full information}
If $q$ is an upper bound on $q(\bl_{1:T})$ and $\eta = \sqrt{8\log N / q}$ then $\E R_{Hedge,T} \leq \sqrt{(q/2)\log N}$.
\end{corollary}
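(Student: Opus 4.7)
The plan is to recognize that the full-information scenario is the degenerate case of the lower-bound model in which $\lb_{i,t}=l_{i,t}$ for every $i$ and $t$, so every slack $s_{i,t}=l_{i,t}-\lb_{i,t}$ equals zero. Under this substitution, the update rule of Exp3.LB reduces $\tloss_{i,t}$ to $\lb_{i,t}=l_{i,t}$, which makes the estimated cumulative losses $\tbL_t$ deterministic and identical to the true cumulative losses $\bL_t$. Consequently, Exp3.LB degenerates precisely to Hedge, and because the distributions $\bp_t$ become deterministic, $\E L_{Exp3.LB,T}=\sum_{t=1}^T \bp_t\cdot\bl_t$, which is Hedge's (deterministic) cumulative loss; the pseudo-regret bound from Theorem~\ref{thm:Exp3.LB} therefore transfers directly to the expected regret of Hedge.

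Next I would specialize the expression inside $\ubQ$ in Theorem~\ref{thm:Exp3.LB}. With $\bs_t=\bzero$, both $\sum_t \|\bs_t\|^2$ and $\sum_t \max_i\{s_{i,t}\}\cdot\diff(\blb_t)$ vanish, so the quantity being upper-bounded is just $\tfrac{1}{2}q(\blb_{1:T})=\tfrac{1}{2}q(\bl_{1:T})$. Hence any upper bound $q\geq q(\bl_{1:T})$ gives a valid choice $\ubQ=q/2$, and Theorem~\ref{thm:Exp3.LB} yields
\[
\E R_{Hedge,T}\leq \frac{\log N}{\eta}+\frac{\eta}{4}\cdot\frac{q}{2}=\frac{\log N}{\eta}+\frac{\eta q}{8}.
\]

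The final step is the routine optimization over $\eta$. Balancing the two terms gives $\eta=\sqrt{8\log N/q}$, which matches the corollary's stated learning rate (it is exactly the instantiation $\eta=\sqrt{4\log N/\ubQ}$ from the theorem with $\ubQ=q/2$). Substituting back, each of the two terms equals $\sqrt{(q\log N)/8}$, and their sum is $\sqrt{(q/2)\log N}$, as claimed.

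There is no real obstacle here: the corollary is a direct instantiation of Theorem~\ref{thm:Exp3.LB} in the zero-slack regime. The only point worth stating explicitly is the reduction of Exp3.LB to Hedge and the corresponding identification of pseudo-regret with expected (in fact deterministic) regret, so that the theorem's pseudo-regret bound applies verbatim to $\E R_{Hedge,T}$.
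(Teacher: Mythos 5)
Your proposal is correct and follows essentially the same route as the paper: specialize Theorem~\ref{thm:Exp3.LB} to the zero-slack case where Exp3.LB reduces to Hedge, take $\ubQ=q/2$, and note that $\eta=\sqrt{8\log N/q}$ is exactly the theorem's $\sqrt{4\log N/\ubQ}$, giving $\sqrt{(q/2)\log N}$. Your explicit remark that the $\bp_t$ become deterministic, so the pseudo-regret bound applies verbatim to $\E R_{Hedge,T}$, is a detail the paper leaves implicit but is the same argument.
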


In the bandit case, making the standard assumption that $l_{i,t}\in[0,1]$ for every $i$ and $t$, we may take $\lb_{1,t}=\ldots=\lb_{N,t}=0$, s.t. $\diff(\blb_t)=0$ for every $t$ and $q(\blb_{1:T})=0$. We have for every $i$ and $t$ that $s_{i,t}\leq 1$, and Exp3.LB becomes Exp3.  Theorem \ref{thm:Exp3.LB} then yields the zeroth-order bound for Exp3 (see, e.g., \citealp[Theorem~3.1]{banditSurvey}).
\begin{corollary}[bandit feedback]\label{cor:bandit model}
If $\eta = \sqrt{2\log N/(NT)}$, then $\E R_{Exp3,T} \leq \sqrt{2TN\log N}$.
\end{corollary}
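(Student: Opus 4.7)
The plan is to derive this directly from Theorem \ref{thm:Exp3.LB} by evaluating the bound $\ubQ$ under the special choice $\lb_{i,t}=0$ for all $i,t$. First I would note that with all lower bounds set to zero, the slacks coincide with the true losses, $s_{i,t}=l_{i,t}\in[0,1]$, and the update rule of Exp3.LB reduces exactly to that of Exp3 (as already observed in the paper right after Algorithm \ref{alg:exp3.lb}); hence the pseudo-regret bound from Theorem \ref{thm:Exp3.LB} applies to Exp3 in this scenario.

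Next I would compute each of the three terms in $\ubQ$. Since $\blb_t=\bzero$, we have $\diff(\blb_t)=0$ and $q(\blb_{1:T})=0$, so both the full-information term and the hybrid term $\sum_t \max_i\{s_{i,t}\}\cdot \diff(\blb_t)$ vanish. The only surviving term is the bandit term $2\sum_{t=1}^T\|\bs_t\|^2$, which satisfies
\begin{equation*}
2\sum_{t=1}^T\|\bs_t\|^2 = 2\sum_{t=1}^T\sum_{i=1}^N s_{i,t}^2 \leq 2\sum_{t=1}^T\sum_{i=1}^N 1 = 2NT,
\end{equation*}
using $s_{i,t}\in[0,1]$. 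Therefore one may take $\ubQ = 2NT$.

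Finally, plugging this value of $\ubQ$ into Theorem \ref{thm:Exp3.LB} gives the optimal learning rate $\eta=\sqrt{4\log N / \ubQ} = \sqrt{2\log N/(NT)}$, matching the choice in the corollary statement, and yields the pseudo-regret bound $\psR_{Exp3,T}\leq\sqrt{\ubQ\log N}=\sqrt{2TN\log N}$. Since the adversary is oblivious, expected regret and pseudo-regret coincide, completing the derivation. There is essentially no obstacle here: the entire argument is a matter of substituting $\lb_{i,t}=0$ and using the $[0,1]$ boundedness of the losses, so the main care is simply in bookkeeping the constants so that the stated $\eta$ and the bound $\sqrt{2TN\log N}$ come out exactly.
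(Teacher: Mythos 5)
Your derivation is correct and matches the paper's own argument exactly: set $\lb_{i,t}=0$ so the full-information and hybrid terms vanish, bound the bandit term by $2NT$ using $s_{i,t}=l_{i,t}\in[0,1]$, and substitute $\ubQ=2NT$ into Theorem \ref{thm:Exp3.LB} to recover the stated $\eta$ and the bound $\sqrt{2TN\log N}$. Nothing further is needed.
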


There is, however, another interesting setting highlighted by the regret bound of Exp3.LB. When on each round $t$, we have either $\diff(\blb_t)=0$ or $\max_i\{s_{i,t}\}=0$, the hybrid term disappears. This happens in particular if on every round the adversary provides either a bandit feedback or a full-information feedback, possibly by adversarial choice. For this scenario, Theorem \ref{thm:Exp3.LB} gives the following zeroth-order bound.
\begin{corollary}[mixed feedback]\label{cor:mixed model}
If there are $T_{b}$ bandit feedback rounds and $T_{f}$ full-information feedback rounds then for $\eta = \sqrt{8\log N/(T_f+4NT_b)}$ it holds that $$\E R_{Exp3.LB,T} \leq \sqrt{(T_f/2+2NT_b)\log N}\;.$$
\end{corollary}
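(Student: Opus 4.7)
The plan is to reduce the corollary to a direct application of Theorem~\ref{thm:Exp3.LB} by controlling the quantity $\ubQ$ separately on the two types of rounds. First I would fix, without loss of generality, the standard normalization mentioned in Corollary~\ref{cor:unknown horizon}: assume $l_{i,t}\in[0,1]$ for all $i,t$. On a bandit round $t$, the adversary supplies trivial lower bounds, so we may take $\lb_{i,t}=0$ for all $i$. This gives $\diff(\blb_t)=0$, and therefore $\diff(\blb_t)^2=0$ and the hybrid term $\max_i\{s_{i,t}\}\cdot\diff(\blb_t)=0$; meanwhile $s_{i,t}=l_{i,t}\in[0,1]$, hence $\|\bs_t\|^2=\sum_{i=1}^N s_{i,t}^2\leq N$. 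On a full-information round $t$, we have $\lb_{i,t}=l_{i,t}$, so $s_{i,t}=0$ for all $i$, which instantly kills both the bandit term $\|\bs_t\|^2$ and the hybrid term; the only contribution is $\diff(\blb_t)^2=(\max_i\{l_{i,t}\}-\min_i\{l_{i,t}\})^2\leq 1$.

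Next I would sum these per-round estimates over the $T_b$ bandit rounds and the $T_f$ full-information rounds to obtain
\[
\frac{1}{2}q(\blb_{1:T}) \;\leq\; \frac{T_f}{2},\qquad 2\sum_{t=1}^T\|\bs_t\|^2 \;\leq\; 2NT_b,\qquad 4\sum_{t=1}^T\max_i\{s_{i,t}\}\cdot\diff(\blb_t)=0,
\]
so the bracketed quantity in Theorem~\ref{thm:Exp3.LB} is upper bounded by $\ubQ := T_f/2 + 2NT_b$. Plugging this $\ubQ$ into Theorem~\ref{thm:Exp3.LB} with the optimal choice $\eta=\sqrt{4\log N/\ubQ}=\sqrt{8\log N/(T_f+4NT_b)}$ yields the claimed bound $\psR_{Exp3.LB,T}\leq\sqrt{(T_f/2+2NT_b)\log N}$.

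There is essentially no obstacle here: the whole content of the corollary is the bookkeeping that the hybrid term vanishes whenever each round is \emph{either} pure bandit \emph{or} pure full information, and the two remaining terms accumulate additively across the two disjoint round types. The only thing to be mindful of is that the corollary is stated for pseudo-regret (as per Theorem~\ref{thm:Exp3.LB}), which coincides with expected regret for the oblivious adversaries assumed throughout Section~\ref{sec:main}, so writing $\E R_{Exp3.LB,T}$ in the conclusion is justified without further argument.
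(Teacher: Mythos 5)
Your proposal is correct and follows essentially the same route as the paper: each round contributes either only to $q(\blb_{1:T})$ (full-information, at most $1$ under the $[0,1]$ loss assumption) or only to $\sum_t\|\bs_t\|^2$ (bandit, at most $N$), the hybrid term vanishes identically, and Theorem~\ref{thm:Exp3.LB} with $\ubQ=T_f/2+2NT_b$ gives exactly the stated $\eta$ and bound. The paper derives the corollary by precisely this observation that the hybrid term disappears when every round is purely bandit or purely full-information.
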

It should be noted that even without foreknowledge of $T_b$ and $T_f$, a standard doubling trick on $T_f/2+2NT_b$ yields the same order of bound, namely, $O(\sqrt{(T_f/2+2NT_b)\log N})$.

More generally, we may consider a variable subset feedback scenario, where on each round $t$, the true losses for a subset $S_t$ of the experts, chosen by the adversary, are revealed to the learner. For losses bounded in $[0,1]$, we have that 
\begin{align*}
\frac{1}{2} q(\blb_{1:T})+2\sum_{t=1}^T \|\bs_t\|^2+4\sum_{t=1}^T \max_i\{s_{i,t}\}\cdot\diff(\blb_t)
& \leq \frac{T}{2} + 2\sum_{t=1}^T (N-|S_t|) + 4T\\
& \leq \frac{9}{2} T + 2\sum_{t=1}^T(N-|S_t|)\;,
\end{align*}    
and applying Exp3.LB thus yields the following.
\begin{corollary}[variable set feedback]\label{cor:var subsets}
If for every $t$ the learner receives feedback for a subset $S_t$ of the experts, then for $\eta = \sqrt{8\log N/(9T + 4\sum_t(N-|S_t|))}$ it holds that $$\E R_{Exp3.LB,T} \leq \sqrt{(9T/2 + 2\sum_{t=1}^T(N-|S_t|))\log N}\;.$$
\end{corollary}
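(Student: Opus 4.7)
The plan is to realize the variable-subset scenario as an instance of the lower-bounded loss feedback model and then invoke Theorem \ref{thm:Exp3.LB} directly. Concretely, I would instruct the adversary to set $\lb_{i,t} = l_{i,t}$ for every $i\in S_t$ (so the slack $s_{i,t}=0$ on observed coordinates, and we effectively obtain full-information feedback for those experts) and $\lb_{i,t} = 0$ for every $i\notin S_t$ (a trivial lower bound, giving bandit-type feedback on the rest). This is a valid encoding because observing $l_{i,t}$ for $i\in S_t$ is at least as informative as receiving the lower bound $\lb_{i,t}=l_{i,t}$, which is all that Exp3.LB requires from those coordinates, and for $i\notin S_t$ the bandit mechanism handles the unobserved loss.

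Next I would bound each of the three quantities appearing inside $\ubQ$ in Theorem \ref{thm:Exp3.LB} using $l_{i,t}\in[0,1]$. First, since every $\lb_{i,t}\in[0,1]$, we have $\diff(\blb_t)\leq 1$ and therefore $q(\blb_{1:T})=\sum_{t=1}^T \diff(\blb_t)^2\leq T$. Second, $s_{i,t}=0$ for $i\in S_t$ and $s_{i,t}=l_{i,t}\leq 1$ for $i\notin S_t$, giving
\[
\sum_{t=1}^T \|\bs_t\|^2 \;=\; \sum_{t=1}^T \sum_{i\notin S_t} l_{i,t}^2 \;\leq\; \sum_{t=1}^T (N-|S_t|).
\]
Third, since $\max_i\{s_{i,t}\}\leq 1$ and $\diff(\blb_t)\leq 1$, the hybrid term is bounded by $\sum_{t=1}^T \max_i\{s_{i,t}\}\cdot\diff(\blb_t)\leq T$.

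Combining these as in the chain already displayed in the paper,
\[
\tfrac{1}{2}q(\blb_{1:T})+2\sum_{t=1}^T \|\bs_t\|^2+4\sum_{t=1}^T \max_i\{s_{i,t}\}\cdot\diff(\blb_t) \;\leq\; \tfrac{T}{2}+2\sum_{t=1}^T(N-|S_t|)+4T \;=\; \tfrac{9}{2}T+2\sum_{t=1}^T(N-|S_t|),
\]
so this quantity is a legal choice of $\ubQ$. Plugging it and the prescribed learning rate $\eta=\sqrt{4\log N/\ubQ}=\sqrt{8\log N/(9T+4\sum_t(N-|S_t|))}$ into the bound of Theorem \ref{thm:Exp3.LB} yields $\E R_{Exp3.LB,T}\leq\sqrt{(9T/2+2\sum_{t=1}^T(N-|S_t|))\log N}$, which is the claim. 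There is essentially no obstacle here; the only mild subtlety is making sure the encoding of subset feedback as adversarial lower bounds is legitimate, which follows because Exp3.LB uses only $\lb_{i,t}$ and the realized loss $l_{I_t,t}$, both of which are available in the subset-feedback scenario.
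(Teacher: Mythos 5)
Your proposal is correct and follows essentially the same route as the paper: encode the subset feedback by setting $\lb_{i,t}=l_{i,t}$ for $i\in S_t$ and $\lb_{i,t}=0$ otherwise, bound the full-information, bandit, and hybrid terms by $T/2$, $2\sum_t(N-|S_t|)$, and $4T$ respectively, and invoke Theorem~\ref{thm:Exp3.LB} with $\ubQ=\tfrac{9}{2}T+2\sum_t(N-|S_t|)$ and $\eta=\sqrt{4\log N/\ubQ}$. No gaps.
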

Again, a doubling trick is applicable even if the sets are not known in advance, yielding the same order of bound, written more succinctly as $O(\sqrt{(T+\sum_t(N-|S_t|))\log N})$.

\section{Model and Algorithm Variants}\label{sec:variants}
In our model, lower bounds on the losses feature as extra information given to the learner, and then play a role in the estimated losses defined by Exp3.LB, which is expected. We might ask if other quantities could feature as the extra information, and how Exp3 should be modified to accommodate them. One natural choice is upper bounds on the losses instead of lower bounds. 

Let $\qu_{i,t}$, for $1\leq i \leq N$, $1\leq t\leq T$, be arbitrary quantities, and assume that the adversary reveals $\qu_{1,t},\ldots,\qu_{N,t}$ to the learner along with the loss of the chosen action $I_t$.  We then define a variant of Exp3, denoted by Exp3.$\qu$, which is the same as Exp3 except that the estimated loss becomes
$$\tloss_{i,t} = \frac{\loss_{i,t}-\qu_{i,t}}{p_{i,t}}\Ind{I_t = i}+\qu_{i,t}\;.$$

A careful examination of the proof of Theorem \ref{thm:Exp3.LB} reveals that $\lb_{i,t}$ may be replaced by $\qu_{i,t}$ (and of course, $s_{i,t}$ by 
$l_{i,t}-\qu_{i,t}$), as long as $l_{i,t} - \qu_{i,t} \geq 0$ for every $i$ and $t$. We thus obtain the following more general form of Theorem \ref{thm:Exp3.LB} (written here more succinctly):
\begin{theorem}\label{thm:Exp3.alpha}
Let $\ubQ_\qu$ be an upper bound on $$\frac{1}{2} q(\mathbf{\bqu}_{1:T})+2\sum_{t=1}^T \sum_{i=1}^N (l_{i,t}-\qu_{i,t})^2+4\sum_{t=1}^T \max_i\{l_{i,t}-\qu_{i,t}\}\cdot\diff(\bqu_t)\;.$$
Then for $\eta = \sqrt{4\log N/\ubQ_\qu}$, it holds that $\E R_{Exp3.\qu,T} \leq \sqrt{\ubQ_\qu \log N}$.
\end{theorem}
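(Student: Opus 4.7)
The plan is to replay the proof of Theorem~\ref{thm:Exp3.LB} verbatim, with the substitution $\lb_{i,t} \leftarrow \qu_{i,t}$ and $s_{i,t} \leftarrow l_{i,t}-\qu_{i,t}$, and verify at each step that the argument is insensitive to this change provided $l_{i,t}-\qu_{i,t} \geq 0$ (which makes the substitution consistent with the roles that $\lb$ and $s$ played). Concretely, I would define the simulated intermediate state $\tL_{i,t-\frac{1}{2}} = \tL_{i,t-1} + \frac{l_{I_t,t} - \qu_{I_t,t}}{p_{I_t,t}}\Ind{I_t = i}$ so that $\tbL_{t}-\tbL_{t-\frac{1}{2}} = \bqu_t$ is the ``full-information half-step.'' A Taylor expansion of $\Phi$ along each half-step gives the identical decomposition $\Phi(\tbL_T) = \sum_t \bp_t \cdot \tbl_t + \tfrac{1}{2}A_t + \tfrac{1}{2}B_t + C_t$ as in \eqref{eq:2:thm:Exp3.LB}.

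The bounds on $A_t$, $B_t$, $C_t$ then transfer mechanically. For $A_t$, Lemma~\ref{lem:hessian of phi} plus Popoviciu applied to $\tbL_t-\tbL_{t-\frac{1}{2}}=\bqu_t$ yields $A_t \geq -\tfrac{\eta}{4}\diff(\bqu_t)^2$. For $B_t$, only the $I_t$-coordinate of $\tbL_{t-\frac{1}{2}}-\tbL_{t-1}$ is nonzero and equals $(l_{I_t,t}-\qu_{I_t,t})/p_{I_t,t}$, so the same Hessian computation gives $B_t \geq -\eta (l_{I_t,t}-\qu_{I_t,t})^2/p_{I_t,t}$. The only place where the positivity hypothesis $l_{i,t}-\qu_{i,t}\ge 0$ is \emph{essential} is the $C_t$ bound: it guarantees $\tL_{I_t,t-\frac{1}{2}} \geq \tL_{I_t,t-1}$, which fixes the signs $p_{I_t,t+\frac{1}{2}}-p_{I_t,t}\leq 0$ and $p_{i,t+\frac{1}{2}}-p_{i,t}\geq 0$ for $i\neq I_t$. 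These sign patterns are what collapse $(\bp_{t+\frac{1}{2}}-\bp_t)\cdot \bqu_t$ down to $(p_{I_t,t+\frac{1}{2}}-p_{I_t,t})\cdot \diff(\bqu_t)$, after which the same Taylor-expansion trick on $\partial\Phi/\partial x_{I_t}$ and Lemma~\ref{lem:hessian of phi} give $C_t \geq -\eta \max_i\{l_{i,t}-\qu_{i,t}\}\cdot \diff(\bqu_t)$.

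Taking expectations in the resulting analogue of \eqref{eq:7:thm:Exp3.LB} proceeds exactly as in Theorem~\ref{thm:Exp3.LB}: conditioning on $I_t \sim \bp_t$ yields $\E\bigl[(l_{I_t,t}-\qu_{I_t,t})^2/p_{I_t,t}\bigr] = \sum_i (l_{i,t}-\qu_{i,t})^2$, while $\E[\bp_t \cdot \tbl_t] = \E[l_{I_t,t}]$ and $\E[\tL_{k,T}] = \E[L_{k,T}]$ by linearity (the $\qu_{i,t}$ and $-\qu_{i,t}$ contributions cancel symmetrically on both sides, with no sign issues because they enter additively rather than through $\Ind{I_t=i}$). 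Plugging in $\eta = \sqrt{4\log N/\ubQ_\qu}$ delivers the claimed bound $\E R_{Exp3.\qu,T} \leq \sqrt{\ubQ_\qu \log N}$.

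The main obstacle is not technical but organizational: one has to audit every appearance of $\lb_{i,t}$ and $s_{i,t}$ in the original proof and confirm that the only property actually invoked (beyond algebraic identities) is the nonnegativity of the slacks, which here is exactly the hypothesis $l_{i,t}-\qu_{i,t}\geq 0$. Once that audit is done, the proof is a direct transcription.
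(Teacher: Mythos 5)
Your proposal is correct and is exactly the paper's argument: the paper proves Theorem~\ref{thm:Exp3.alpha} by observing that the proof of Theorem~\ref{thm:Exp3.LB} goes through verbatim with $\lb_{i,t}$ replaced by $\qu_{i,t}$ and $s_{i,t}$ by $l_{i,t}-\qu_{i,t}$, the only property used beyond algebra being $l_{i,t}-\qu_{i,t}\geq 0$. One small correction to your audit: the nonnegativity of the slack is not needed only for $C_t$ --- it is also what gives $q_{I_t}\leq p_{I_t,t}$ in the $B_t$ bound (and $p'_{I_t}\leq p_{I_t,t}$ inside the $C_t$ argument), since these rely on $\tL_{I_t,t-\frac{1}{2}}\geq\tL_{I_t,t-1}$; as the hypothesis holds for all $i,t$, nothing breaks.
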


To apply this theorem in a case where upper bounds on the losses are provided, we need an extra step. The reason is that we cannot simply take $\qu_{i,t}$ to be the upper bound $\upsilon_{i,t}$ on $l_{i,t}$, since the requirement $l_{i,t} - \qu_{i,t} \geq 0$ would be violated. However, assuming $\mathcal{M}_t$ is an upper bound on $\max_i\{\upsilon_{i,t} - l_{i,t}\}$ known to the learner, then $\qu_{i,t} = \upsilon_{i,t} - \mathcal{M}_t$ would be a valid choice for $\qu_{i,t}$. 
Denoting the resulting algorithm by Exp3.UB, we obtain the following.
\begin{corollary}\label{cor:upper bounds}
For the above scenario,  
if $\ubQ_\qu$ is an upper bound on $$\frac{1}{2} q(\bqu_{1:T})+2\sum_{t=1}^T \sum_{i=1}^N (l_{i,t}-\qu_{i,t})^2+4\sum_{t=1}^T \max_i\{l_{i,t}-\qu_{i,t}\}\cdot\diff(\bqu_t)\;,$$
then for $\eta = \sqrt{4\log N/\ubQ_\qu}$, it holds that $\E R_{Exp3.UB,T} \leq \sqrt{\ubQ_\qu \log N}$.
\end{corollary}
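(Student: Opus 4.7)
The plan is to apply Theorem \ref{thm:Exp3.alpha} with the specific choice $\qu_{i,t}=\upsilon_{i,t}-\mathcal{M}_t$ and simply verify that this choice meets the hypothesis of that theorem, namely that $l_{i,t}-\qu_{i,t}\geq 0$ for every $i$ and $t$. This is immediate from the assumption on $\mathcal{M}_t$: since $\mathcal{M}_t\geq \max_i\{\upsilon_{i,t}-l_{i,t}\}\geq \upsilon_{i,t}-l_{i,t}$ for every $i$, we obtain $l_{i,t}-\qu_{i,t}=l_{i,t}-\upsilon_{i,t}+\mathcal{M}_t\geq 0$, as required.

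Next I would check that Exp3.UB, as constructed, is well-defined in the sense that the learner can actually compute the $\qu_{i,t}$ values it needs. This is also immediate: the upper bounds $\upsilon_{i,t}$ are revealed by the adversary, and $\mathcal{M}_t$ is assumed known to the learner by hypothesis, so $\qu_{i,t}=\upsilon_{i,t}-\mathcal{M}_t$ is computable at time $t$ without knowledge of the unobserved losses. The estimated losses
\[
\tloss_{i,t}=\frac{l_{i,t}-\qu_{i,t}}{p_{i,t}}\Ind{I_t=i}+\qu_{i,t}
\]
then require only the observed quantity $l_{I_t,t}$ in their first term, matching the definition of Exp3.$\qu$ used by Theorem \ref{thm:Exp3.alpha}.

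Finally, I would invoke Theorem \ref{thm:Exp3.alpha} directly: with the nonnegativity condition verified and Exp3.UB being exactly Exp3.$\qu$ for this particular $\bqu$, the conclusion of the theorem gives $\E R_{Exp3.UB,T}\leq \sqrt{\ubQ_\qu \log N}$ for $\eta=\sqrt{4\log N/\ubQ_\qu}$, which is precisely the corollary's claim. There is essentially no main obstacle here beyond the bookkeeping of translating upper bounds into valid $\qu_{i,t}$ values via the shift by $\mathcal{M}_t$; the quadratic variation quantity $q(\bqu_{1:T})$ is unaffected by this shift (since the shift $\mathcal{M}_t$ is common to all coordinates on round $t$ and $\diff(\cdot)$ is shift-invariant), so the upper bound $\ubQ_\qu$ can equivalently be written in terms of $\bqu_t$ or, if one prefers, in terms of $\boldsymbol{\upsilon}_t$ after accounting for the per-round shifts. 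The only subtle point worth flagging is that the hybrid and bandit-like terms in $\ubQ_\qu$ involve the slacks $l_{i,t}-\qu_{i,t}=l_{i,t}-\upsilon_{i,t}+\mathcal{M}_t$, which can be larger than the "natural" slacks $\upsilon_{i,t}-l_{i,t}$ when $\mathcal{M}_t$ is a loose upper bound; thus a tighter choice of $\mathcal{M}_t$ directly improves the regret guarantee, but the corollary holds regardless.
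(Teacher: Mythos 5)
Your proposal is correct and matches the paper's argument exactly: define $\qu_{i,t}=\upsilon_{i,t}-\mathcal{M}_t$, verify the requirement $l_{i,t}-\qu_{i,t}\geq 0$ from the assumption that $\mathcal{M}_t$ upper bounds $\max_i\{\upsilon_{i,t}-l_{i,t}\}$, and invoke Theorem \ref{thm:Exp3.alpha} for this instance of Exp3.$\qu$. Your additional observations (computability of $\qu_{i,t}$ by the learner, shift-invariance of $\diff(\cdot)$, and the effect of a loose $\mathcal{M}_t$ on the slacks) are accurate but not needed beyond what the paper states.
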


\section{Lower Bounds}\label{sec:lower}
The upper bounds on the expected regret of Exp3.LB that were shown in Section \ref{sec:main} featured quantities of the form $\Theta(q(\blb_{1:T}) + \sum_t\|\bs_t\|^2)$. Given a value $Q = q(\blb_{1:T}) + \sum_t\|\bs_t\|^2$, we may consider either a full-information scenario, where $Q = q(\bl_{1:T})$ or a bandit scenario, where $Q = \sum_t\|\bl_t\|^2$. We may then use existing lower bounds, and in both cases these bounds are of the form $\Omega(\sqrt{Q})$.

We may also examine a more elaborate requirement, where we are prescribed both $Q_1 = q(\blb_{1:T})$ and $Q_2=\sum_t\|\bs_t\|^2$. In this case, since 
$$q(\blb_{1:T}) + \sum_t\|\bs_t\|^2 = \Theta(\max\{q(\blb_{1:T}), \sum_t\|\bs_t\|^2\})\;,$$ we may consider a full-information scenario for $Q_1$ if $Q_1\geq Q_2$, and a bandit scenario for $Q_2$, otherwise. In both cases we can then add artificial rounds to fulfill the rest of the prescription ($Q_2$ or $Q_1$) without possibly decreasing the expected regret. Existing lower bounds then yield an $\Omega(\sqrt{\max\{Q_1,Q_2\}})$ or equivalently, $\Omega(\sqrt{Q_1+Q_2})$, as before.

Our bounds are thus tight up to logarithmic factors for the above requirements.  We comment, however, that in principle, there might be more elaborate requirements that would call for more refined bounds. In this context it is interesting to consider the bound of Corollary \ref{cor:var subsets}, for which results on graph-structured feedback are applicable. For the case where the set $S_t$ is fixed over time, \citet{alon2017nonstochastic} give an optimal lower bound, which is the same as our upper bound up to logarithmic factors.

\section{High-Probability Regret Bounds}\label{sec:hp bounds}
Like Exp3 on which it is based, Exp3.LB uses loss estimates $\tloss_{i,t}$ whose variance may behave like $1/p_{i,t}$. To enable regret bounds that hold with high probability, special care is required to control this variance. The authors of Exp3 introduced the algorithmic variant Exp3.P, which biases the loss estimates and mixes the probability of Exp3 with a suitable uniform distribution. Here we only bias the loss estimate of the chosen action. This allows us to define a bias that depends on the slack, which is unobservable except for the chosen action.\footnote{We note that different alternatives to the biasing mechanism of Exp3.P have also been introduced by \citet{DBLP:journals/jmlr/AudibertB10} and \citet{kocak2014efficient}, the latter in the context of Exp3.} The resulting algorithm, Exp.LB.P, is given below.
\begin{algorithm}\label{alg:exp3.lb.p}
\SetAlgoLined
\textbf{Parameters:} $\eta>0$, $\beta\geq 0$.\\
Let $\bp_1$ be the uniform distribution over $\{1,\hdots,N\}$, and let $\tbL_{0}=\bzero$.\\
For each round $t=1,\dots,T$
\begin{enumerate}
\item 
Draw an action $I_t$ from the probability distribution $\bp_t$.
\item 
Define $x_{i,t} = \frac{\beta s_{i,t}(1- p_{i,t})}{p_{i,t}(1-\beta s_{i,t})+\beta s_{i,t}(1-p_{i,t})}$ and calculate for $i = I_t$.
\item 
For each action $i=1,\dots,N$, compute the estimated loss $${\dt \tloss_{i,t} = \frac{s_{i,t}(1-x_{i,t})\cdot\Ind{I_t = i}}{p_{i,t}}+\lb_{i,t}}$$ and update the estimated cumulative loss $\tL_{i,t}=\tL_{i,t-1}+\tloss_{i,t}$.
\item 
Compute the new distribution over actions $\bp_{t+1}=\bigl(p_{1,t+1},\hdots,p_{N,t+1}\bigr)$, where
$$p_{i,t+1} = \frac{\exp{\left(- \eta \tL_{i,t}\right)}}{\sum_{k=1}^{N} \exp{\left(- \eta \tL_{k,t}\right)}}\;.$$
\end{enumerate}
\caption{Exp3.LB.P}
\end{algorithm}

The new algorithm is identical to Exp3.LB, except for using the corrected slacks $s_{i,t}(1-x_{i,t})$ instead of $s_{i,t}$ in the estimated losses (see step 2 of the algorithm for the definition of $x_{i,t}$). Intuitively, for $\beta>0$, the factor $x_{i,t}$ approaches $1$ for small probabilities and generally prevents extreme behavior of the estimated losses.\footnote{This behavior depends on the magnitude of $\beta s_{i,t}$. It should also be noted that technically, we also allow $\beta s_{i,t}=1$, which implies $x_{i,t}=1$ for all probability values.} For $\beta=0$, Exp3.LB.P simply becomes Exp3.LB. Some useful properties of this correction factor are summarized in the next technical lemma.
\begin{lemma}\label{lem:correction factor properties}
For any $i$ and $t$, if $\beta s_{i,t} \leq 1$, then the correction factor $x_{i,t}$ satisfies the following:
\begin{enumerate}
    \item[(i)]
    $x_{i,t}$ is well-defined, obtains values in $[0,1]$, and $x_{i,t}=0$ iff $\beta s_{i,t}=0$.
    \item[(ii)]
    $x_{i,t} = \beta s_{i,t}\left(\frac{1-x_{i,t}}{p_{i,t}}+2x_{i,t}-1\right)$.
    \item[(iii)]
    $\beta s_{i,t}\left(\frac{1-x_{i,t}}{p_{i,t}}-1\right)\leq 1$.
    \item[(iv)] 
    $p_{i,t} x_{i,t} s_{i,t} \leq \beta s_{i,t}^2\,$.
    \item[(v)] 
    $\frac{\beta s_{i,t}(1-x_{i,t})^2}{p_{i,t}} \leq 1$.
\end{enumerate}
\end{lemma}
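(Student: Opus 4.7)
The plan is to introduce the shorthand $a = \beta s_{i,t}$, $p = p_{i,t}$, and $x = x_{i,t}$, under which
$$x = \frac{a(1-p)}{p(1-a) + a(1-p)},$$
and then to prove (ii) first, since its rearrangement $a(1-x)/p = x - 2ax + a = a + x(1-2a)$ will make (iii) and (v) drop out almost immediately. For (i), the denominator is a sum of two nonnegative terms (recalling $0 \le a \le 1$ and $p \in (0,1)$ under the Hedge update), and can vanish only if $a = 0$ and $a = 1$ simultaneously, which is impossible, so it is strictly positive; the numerator is nonnegative and the denominator exceeds it by $p(1-a) \ge 0$, giving $x \in [0,1]$, with $x = 0$ iff $a(1-p) = 0$ iff $a = 0$. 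For (ii), I would simply plug the closed form in: $1 - x = p(1-a)/(p+a-2ap)$, so $(1-x)/p = (1-a)/(p+a-2ap)$, and combining with $2x - 1$ over the common denominator the numerator collapses to $1-p$, so that multiplying by $a$ returns $x$.

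For (iii), the rearrangement of (ii) gives
$$a\left(\frac{1-x}{p} - 1\right) = x(1-2a),$$
which is nonpositive for $a \ge 1/2$ and at most $x \cdot 1 \le 1$ for $a < 1/2$. For (iv), assuming $s_{i,t} > 0$ (the case $s_{i,t}=0$ is trivial), the inequality reduces after cancelling $s_{i,t}$ to $p x \le a$, and substituting the closed form rearranges to $a(2p-1) \le p^2$. If $p \le 1/2$ the left side is nonpositive; otherwise, using $a \le 1$ gives $a(2p-1) \le 2p - 1$, which is at most $p^2$ because $(p-1)^2 \ge 0$. For (v), I would first bound $(1-x)^2 \le 1-x$ using $x \ge 0$, and then apply the rearranged identity $a(1-x)/p = a + x(1-2a)$: when $a \le 1/2$ this is at most $a + (1-2a) = 1-a \le 1$, and when $a > 1/2$ the term $x(1-2a)$ is nonpositive, so the expression is at most $a \le 1$.

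The proof is essentially mechanical, with no single step deep. The main obstacle is cosmetic rather than technical: keeping the three $a \lessgtr 1/2$ case splits in (iii), (iv), and (v) concise and non-repetitive, and handling the degenerate corner cases $a=1$ and $p \to 1$ in (i) in line with the paper's footnote convention that $x = 1$ when $a = 1$. The real content sits in the algebraic identity (ii); everything else is a short rearrangement plus a sign check.
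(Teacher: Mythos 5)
Your proposal is correct and follows essentially the same route as the paper: verify the identity in (ii) directly from the closed form of $x_{i,t}$, then deduce (iii)--(v) by short sign and endpoint arguments (your use of $(1-x_{i,t})^2\leq 1-x_{i,t}$ and the $a\lessgtr 1/2$, $p\lessgtr 1/2$ case splits replaces the paper's linearity-in-$z$ and linearity-in-$a$ endpoint evaluations, but this is only a cosmetic difference). Your treatment of (i) is in fact slightly more careful than the paper's, which dismisses it as obvious.
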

\begin{proof}
Derived by simple arithmetic from the definition of $x_{i,t}$.
\begin{enumerate}
    \item[(i)]
Since we always have that $0<p_{i,t}<1$ and since $0 \leq \beta s_{i,t} \leq 1$, the claim is obvious.
    \item[(ii)]
    If $\beta s_{i,t}=0$ then the claim is true, so assuming $\beta s_{i,t}>0$, we have that 
    $$x_{i,t} = \frac{\beta s_{i,t}(1- p_{i,t})}{p_{i,t}+\beta s_{i,t} -2\beta p_{i,t}  s_{i,t}}=\frac{\frac{1}{p_{i,t}}- 1}{\frac{1}{\beta s_{i,t}}+\frac{1}{p_{i,t}} -2}\;,$$
    and therefore,
    $$x_{i,t}\left(\frac{1}{\beta s_{i,t}}+\frac{1}{p_{i,t}} -2\right) = \frac{1}{p_{i,t}}- 1\;.$$
    Rearranging, we get
    $$\frac{x_{i,t}}{\beta s_{i,t}} = \frac{1-x_{i,t}}{p_{i,t}} +2x_{i,t}-1\;,$$
    and multiplying both sides by $\beta s_{i,t}$ yields the claim.
    \item[(iii)]
    From (i) and (ii) we immediately have that 
    $$\beta s_{i,t}\left(\frac{1-x_{i,t}}{p_{i,t}}-1\right) \leq 
    \beta s_{i,t}\left(\frac{1-x_{i,t}}{p_{i,t}}+2x_{i,t}-1\right) = x_{i,t} \leq 1\;.$$
    \item[(iv)]
    From (ii) we have that 
    $$p_{i,t} x_{i,t} s_{i,t}  = \beta s_{i,t}^2(1-x_{i,t}+(2x_{i,t}-1)p_{i,t})\;.$$
    If $z\in [0,1]$, the expression $1-z+(2z-1)p_{i,t}$ attains its maximum for $z=0$ or $z=1$, and therefore
    $$p_{i,t} x_{i,t} s_{i,t} \leq \beta s_{i,t}^2 \max\{1-p_{i,t},p_{i,t}\}\leq \beta s_{i,t}^2\;.$$
    \item[(v)]
    Denote $a=\beta s_{i,t}$. By (ii) we have that
    $$\frac{a(1-x_{i,t})}{p_{i,t}} = {x_{i,t}-a(2x_{i,t}-1)}\;,$$
    and therefore
    \begin{align*}
    \frac{a(1-x_{i,t})^2}{p_{i,t}} &=(1-x_{i,t})(x_{i,t}-a(2x_{i,t}-1))\;.
    \end{align*}
    For $a\in [0,1]$ it is clear that the r.h.s. of the last expression attains its maximum for $a=0$ or $a=1$, yielding that 
    \begin{align*}
    \frac{a(1-x_{i,t})^2}{p_{i,t}} &\leq \max\{x_{i,t}(1-x_{i,t}),(1-x_{i,t})^2\}\leq 1\;.
    \end{align*}
\end{enumerate}
The proof is complete.
\end{proof}
The definition of $x_{i,t}$ is handy in proving the following key lemma, which is modified from Lemma 3.2 in \citet{banditSurvey}. 
\begin{lemma}\label{modified bcb3.2}
Let $\beta>0$ and $\max_{i,t}\{\beta s_{i,t}\} \leq 1$, and fix $1\leq i \leq N$. For every $\delta>0$, it holds with probability at least $1-\delta$ that 
$$\tL_{i,T} \leq L_{i,T} + \frac{1}{\beta}\log\frac{1}{\delta}\;.$$
\end{lemma}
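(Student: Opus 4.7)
The plan is the standard exponential-supermartingale argument. Let $W_t = \exp(\beta(\tL_{i,t} - L_{i,t}))$ with $W_0 = 1$; I aim to show that $\{W_t\}_{t\geq 0}$ is a supermartingale with respect to the filtration $\mathcal{F}_{t-1}$ generated by $I_1, \ldots, I_{t-1}$. Since the adversary is oblivious, all of $s_{i,t}$, $\lb_{i,t}$, $p_{i,t}$, and $x_{i,t}$ are $\mathcal{F}_{t-1}$-measurable, and only the draw of $I_t$ is random. Once the supermartingale property is established, Markov's inequality gives $\Pr(W_T \geq 1/\delta) \leq \E[W_T] \leq W_0 = 1$, and unpacking $W_T$ yields exactly the stated high-probability bound.

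To prove the supermartingale property, I would factor $W_t = W_{t-1} \cdot e^{Y_t}$ with $Y_t = \beta(\tloss_{i,t} - \loss_{i,t})$, and verify that $\E[e^{Y_t} \mid \mathcal{F}_{t-1}] \leq 1$. Conditional on $\mathcal{F}_{t-1}$, $Y_t$ is two-valued: it equals $\beta s_{i,t}((1-x_{i,t})/p_{i,t} - 1)$ with probability $p_{i,t}$ (when $I_t = i$) and $-\beta s_{i,t}$ with probability $1-p_{i,t}$. Property (iii) of Lemma~\ref{lem:correction factor properties} guarantees that the first value is at most $1$, and the second is non-positive, so $Y_t \leq 1$ always. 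This justifies applying the elementary inequality $e^z \leq 1 + z + z^2$, valid for $z \leq 1$, pointwise and then taking conditional expectations, yielding $\E[e^{Y_t} \mid \mathcal{F}_{t-1}] \leq 1 + \E[Y_t \mid \mathcal{F}_{t-1}] + \E[Y_t^2 \mid \mathcal{F}_{t-1}]$.

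The key algebraic step is to show that the last two moments sum to a non-positive quantity. A direct computation from the two-point distribution of $Y_t$ gives $\E[Y_t \mid \mathcal{F}_{t-1}] = -\beta s_{i,t} x_{i,t}$ and $\E[Y_t^2 \mid \mathcal{F}_{t-1}] = \beta^2 s_{i,t}^2 \bigl((1-x_{i,t})^2/p_{i,t} + 2x_{i,t} - 1\bigr)$. I then plan to invoke identity (ii) of Lemma~\ref{lem:correction factor properties}, rewritten (when $\beta s_{i,t} > 0$) as $(1-x_{i,t})/p_{i,t} = x_{i,t}/(\beta s_{i,t}) + 1 - 2x_{i,t}$, to eliminate the $1/p_{i,t}$ in the second-moment expression. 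After short simplification, the sum collapses to $-\beta^2 s_{i,t}^2 x_{i,t}(1-x_{i,t})/p_{i,t}$, which is non-positive since $x_{i,t} \in [0,1]$ by property (i). The degenerate cases $\beta s_{i,t} \in \{0,1\}$ must be handled separately but are immediate: in both cases $Y_t$ becomes deterministic (identically $0$ or identically $-1$, respectively), so $\E[e^{Y_t}\mid\mathcal{F}_{t-1}] \leq 1$ holds trivially.

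The main obstacle is exactly this algebraic cancellation: it is not a priori obvious that the correction factor $x_{i,t}$ has been tuned so that the first- and second-order terms in the MGF expansion combine into a non-positive expression. This is really the point of identity (ii), which motivates the particular definition of $x_{i,t}$ in Exp3.LB.P in the first place. Once $\{W_t\}$ is shown to be a supermartingale, the Markov step completes the proof essentially without further calculation.
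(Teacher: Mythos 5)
Your proposal is correct and follows essentially the same route as the paper: the same conditional exponential-moment (supermartingale) argument with $e^z\leq 1+z+z^2$ for $z\leq 1$, justified by parts (iii) and (ii) of Lemma~\ref{lem:correction factor properties}, followed by Markov's inequality. The only cosmetic difference is that you compute the first- plus second-moment term exactly as $-\beta^2 s_{i,t}^2 x_{i,t}(1-x_{i,t})/p_{i,t}\leq 0$, whereas the paper upper-bounds $(1-x_{i,t})^2$ by $1-x_{i,t}$ and then uses identity (ii) to make the expression collapse to zero.
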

\begin{proof}
Let $\E_t$ be the expectation conditioned on $I_1,\ldots, I_{t-1}$. Since $e^z \leq 1+z+z^2$ for every $z\leq 1$, and using part (iii) of Lemma \ref{lem:correction factor properties}, we have for every $t$ that
\begin{align*}
\E_t\exp\left(\beta s_{i,t} \left(\frac{(1-x_{i,t})\Ind{I_t = i}}{p_{i,t}} - 1\right)\right)
& \leq 1 + \E_t\left[\beta s_{i,t} \left(\frac{(1-x_{i,t})\Ind{I_t = i}}{p_{i,t}} - 1\right)\right] \\
&+ \E_t\left[\beta^2 s^2_{i,t} \left(\frac{(1-x_{i,t})\Ind{I_t = i}}{p_{i,t}} - 1\right)^2\right] \\
& = 1 -\beta x_{i,t}s_{i,t} + \beta^2 s^2_{i,t} \left(\frac{(1-x_{i,t})^2}{p_{i,t}}+2x_{i,t}-1\right)\\
& = 1 +\beta s_{i,t}\left(\beta s_{i,t}\left(\frac{(1-x_{i,t})^2}{p_{i,t}}+2x_{i,t}-1\right)-x_{i,t}\right)\\
& \leq 1 +\beta s_{i,t}\left(\beta s_{i,t}\left(\frac{1-x_{i,t}}{p_{i,t}}+2x_{i,t}-1\right)-x_{i,t}\right)\\
&=1\;,
\end{align*}
where the last equality uses part (ii) of Lemma \ref{lem:correction factor properties}. 
By a further use of induction we obtain that
$$\E\left[\exp\left(\beta \sum_{t=1}^T \frac{s_{i,t}(1-x_{i,t}) \Ind{I_t = i}}{p_{i,t}}-\beta \sum_{t=1}^T  s_{i,t}\right)\right]\leq 1\;.$$
Now, for any random variable $X$, Markov's inequality implies that $\Pr(X > \log(1/\delta)) \leq \delta \E e^X$. Thus, with probability at least $1-\delta$,
$$\beta \sum_{t=1}^T \frac{s_{i,t}(1-x_{i,t}) \Ind{I_t = i}}{p_{i,t}}-\beta \sum_{t=1}^T  s_{i,t} \leq  \log\frac{1}{\delta}\;,$$
or equivalently,
$$ \sum_{t=1}^T \frac{s_{i,t}(1-x_{i,t}) \Ind{I_t = i}}{p_{i,t}}+\lb_{i,t}-l_{i,t} \leq \frac{1}{\beta}\log\frac{1}{\delta} \;,$$
namely, 
$$\tL_{i,T} - L_{i,T} \leq \frac{1}{\beta}\log\frac{1}{\delta}\;,$$
completing the proof.
\end{proof}
We can now bound the regret of Exp3.LB.P.
\begin{theorem}\label{thm:exp3.lb.p}
Let $0< \delta < 1$, let $\ubQ$ be an upper bound on $$\frac{1}{2} q(\blb_{1:T})+2\sum_{t=1}^T \|\bs_t\|^2+4\sum_{t=1}^T \max_i\{s_{i,t}\}\cdot\diff(\blb_t)\;,$$ and set $\eta = \sqrt{\frac{4}{\ubQ}\log N}$. 
\begin{enumerate}
\item[(i)] 
If $\beta = \sqrt{\frac{2}{\ubQ}\log\frac{N+3}{\delta}}$, then assuming $\beta\max_{i,t}\{ s_{i,t}\} \leq 1$, it holds w.p. at least $1-\delta$ that
$R_{Exp3.LB.P,T} = O\left(\sqrt{\ubQ\log\frac{N}{\delta}}\right)$.
\item[(ii)]
If $\beta = \sqrt{\frac{2}{\ubQ}}$, then w.p. at least $1-\delta$, $R_{Exp3.LB.P,T} = O\left(\sqrt{\ubQ}\cdot \log(N/\delta)\right)$.
\item[(iii)]
For the scenario where $l_{i,t}\in [0,1]$ for every $i$ and $t$, requiring $\ubQ \geq 2T$ and setting 
$$\beta = \min\left\{1,\sqrt{\frac{2}{\ubQ}\log\frac{N+3}{\delta}}\right\}$$ 
yields that $R_{Exp3.LB.P,T} = O\left(\sqrt{\ubQ\log(N/\delta)}\right)$ w.p. at least $1-\delta$. This bound implies the zeroth-order regret bound of Exp3.P, $O(\sqrt{NT\log(N/\delta)})$, for the bandit setting and of Hedge, $O(\sqrt{T\log(N/\delta)})$, for the full-information setting.
\end{enumerate}
\end{theorem}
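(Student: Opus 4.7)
The plan is to combine the deterministic inequality from Corollary \ref{cor:for hp alg} with Lemma \ref{modified bcb3.2} and a few martingale concentrations, then tune $\eta$ and $\beta$ per part. First, I would instantiate Corollary \ref{cor:for hp alg} with the corrected slacks $s'_{i,t}=s_{i,t}(1-x_{i,t})$. Using $1-x_{i,t}\le 1$ simplifies the hybrid term to $\eta\sum_t \max_i\{s_{i,t}\}\cdot\diff(\blb_t)$, and Lemma \ref{lem:correction factor properties}(v) bounds each variance-like term $s_{I_t,t}^2(1-x_{I_t,t})^2/p_{I_t,t}$ by $s_{I_t,t}/\beta$---which under $\beta s_{i,t}\le 1$ is also at most $1/\beta^2$---while its conditional expectation equals $\sum_i s_{i,t}^2(1-x_{i,t})^2\le \|\bs_t\|^2$. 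A Bernstein-type martingale concentration then bounds the sum by $\sum_t\|\bs_t\|^2$ plus lower-order fluctuations.

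Next I would bridge from estimated to actual losses. Writing $\bp_t\cdot\tbl_t=\bp_t\cdot\blb_t+s_{I_t,t}(1-x_{I_t,t})$ and $l_{I_t,t}=\lb_{I_t,t}+s_{I_t,t}$, the difference $L_{\mathrm{Exp3.LB.P},T}-\sum_t\bp_t\cdot\tbl_t$ splits into a martingale $M_T=\sum_t(\lb_{I_t,t}-\bp_t\cdot\blb_t)$ with per-step range $\diff(\blb_t)$, plus a non-negative sum $X_T=\sum_t s_{I_t,t}x_{I_t,t}$ whose conditional mean is at most $\beta\|\bs_t\|^2$ by Lemma \ref{lem:correction factor properties}(iv). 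Azuma--Hoeffding bounds $M_T$ by $O(\sqrt{q(\blb_{1:T})\log(1/\delta)})$, and a similar concentration keeps $X_T$ close to $\beta\ubQ$. Simultaneously, Lemma \ref{modified bcb3.2} applied at confidence $\delta/(N+3)$ with a union bound over $k$ yields $\tL_{k,T}\le L_{k,T}+\tfrac{1}{\beta}\log\tfrac{N+3}{\delta}$ for all $k$. A final union bound over the three or four concentration events produces, with probability $\ge 1-\delta$,
\[
R_{\mathrm{Exp3.LB.P},T}\le \tfrac{1}{\eta}\log N+\tfrac{\eta}{4}\ubQ+\beta\ubQ+\tfrac{1}{\beta}\log\tfrac{N+3}{\delta}+(\text{lower-order fluctuations}).
\]

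With $\eta=\sqrt{4\log N/\ubQ}$ the first two terms give $O(\sqrt{\ubQ\log N})$. For part~(i), $\beta=\sqrt{(2/\ubQ)\log((N+3)/\delta)}$ balances $\beta\ubQ$ against $\tfrac{1}{\beta}\log((N+3)/\delta)$, both $O(\sqrt{\ubQ\log(N/\delta)})$, and the hypothesis $\beta\max_{i,t}s_{i,t}\le 1$ keeps all fluctuation terms sub-leading. For part~(ii), $\beta=\sqrt{2/\ubQ}$ automatically satisfies $\beta\max_{i,t}s_{i,t}\le \beta\sqrt{\sum_t\|\bs_t\|^2}\le 1$ (using $\sum_t\|\bs_t\|^2\le \ubQ/2$), dropping the explicit hypothesis, but now $\tfrac{1}{\beta}\log(1/\delta)=O(\sqrt{\ubQ}\log(N/\delta))$ becomes the leading term. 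For part~(iii), $s_{i,t}\le 1$ and $\beta\le 1$ make the hypothesis automatic, while $\ubQ\ge 2T$ absorbs the $\sqrt{T}$-type fluctuations into $\sqrt{\ubQ\log(N/\delta)}$; plugging in $\ubQ=\Theta(NT)$ (bandit limit) or $O(T)$ (full-information limit) recovers the classical Exp3.P and Hedge bounds. The main obstacle will be the careful bookkeeping of these separate concentration events and ensuring that each fluctuation term remains sub-leading under the relevant hypothesis: this is exactly what the conditions $\beta\max_{i,t}s_{i,t}\le 1$, the smaller $\beta$, and $\ubQ\ge 2T$ across the three parts are arranged to guarantee.
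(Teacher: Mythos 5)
Your plan is essentially the paper's own proof: instantiate Corollary \ref{cor:for hp alg} with the corrected slacks $s'_{i,t}=s_{i,t}(1-x_{i,t})$, use Lemma \ref{lem:correction factor properties} (iv)--(v) to control the conditional means and per-step ranges, apply Azuma--Hoeffding-type concentration to the three random sums, invoke Lemma \ref{modified bcb3.2} with confidence $\delta/(N+3)$ and a union bound, and then tune $\eta$ and $\beta$ per part exactly as you describe (including the argument that $\beta=\sqrt{2/\ubQ}$ automatically gives $\beta\max_{i,t}\{s_{i,t}\}\le 1$ via $\sum_t\|\bs_t\|^2\le\ubQ/2$). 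The only point where your sketch is thinner than the paper is part (iii) in the truncated regime $\beta=1$: there the term $\beta^{-1}\log\frac{N+3}{\delta}$ in your master inequality is \emph{not} $O\bigl(\sqrt{\ubQ\log(N/\delta)}\bigr)$, and saying that $\ubQ\ge 2T$ ``absorbs the fluctuations'' does not cover it; the paper instead observes that $\beta=1$ forces $\log\frac{N+3}{\delta}\ge\ubQ/2\ge T$, so the trivial bound $R_{Exp3.LB.P,T}\le T$ (valid since $l_{i,t}\in[0,1]$) already gives $R\le\sqrt{\ubQ\log\frac{N+3}{\delta}}$ --- an easy fix, but one your write-up should make explicit.
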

\begin{proof}
Much of the analysis of Exp3.LB is also applicable to Exp3.LB.P, and this shared part is given in Corollary \ref{cor:for hp alg}. Thus, since $s'_{i,t} = s_{i,t}(1-x_{i,t})\in [0,s_{i,t}]$ for every $i$ and $t$, we have for every $k$ that 
\begin{equation}\label{eq:6:thm:Exp3.LB.P}
\sum_{t=1}^T \bp_t\cdot \tbl_t - \tL_{k,T}\leq \frac{1}{\eta}\log N +  \frac{\eta}{8}\cdot q(\blb_{1:T}) + \frac{\eta}{2}\sum_{t=1}^T \frac{s'^2_{I_t,t}}{p_{I_t,t}} +\eta \sum_{t=1}^T \max_i\{s_{i,t}\}\cdot\diff(\blb_t)\;,
\end{equation}
where we additionally replaced $\max_i\{s'_{i,t}\}$ with $\max_i\{s_{i,t}\}$.

We next establish some high-probability bounds. First, by the Azuma-Hoeffding inequality (see, e.g., Lemma A.7 in \citealp{BookCBL}) it holds w.p. at least $1-\delta$ that
\begin{equation}\label{eq:7:thm:Exp3.LB.P}
\sum_{t=1}^T \bp_t\cdot\blb_t - \lb_{I_t,t} \geq -\left(\frac{1}{2}\log\frac{1}{\delta}\sum_{t=1}^T \diff(\blb_t)^2\right)^\frac{1}{2} =  -\left(\frac{1}{2}q(\blb_{1:T})\log\frac{1}{\delta} \right)^\frac{1}{2}\;.
\end{equation}
Then, we have for every $t$ that $0 \leq x_{I_t,t}s_{I_t,t}\leq s_{I_t,t}\leq \max_i\{s_{i,t}\}$ and 
\begin{align*}
\E_t x_{I_t,t}s_{I_t,t}&=\sum_{i=1}^N p_{i,t}x_{i,t}s_{i,t}\leq \sum_{i=1}^N \beta s_{i,t}^2 =  \beta\|\bs_t\|^2\;,
\end{align*} 
where the inequality is by part (iv) of Lemma \ref{lem:correction factor properties}. Thus, again by the Azuma-Hoeffding inequality, it holds w.p. at least $1-\delta$ that  \begin{equation}\label{eq:8:thm:Exp3.LB.P}
\sum_{t=1}^T x_{I_t,t}s_{I_t,t} - \beta\sum_{t=1}^T\|\bs_t\|^2 \leq \left(\frac{1}{2}\log\frac{1}{\delta}\sum_{t=1}^T \max_i\{s_{i,t}^2\}\right)^\frac{1}{2}\;.
\end{equation}
Using \eqref{eq:7:thm:Exp3.LB.P} and \eqref{eq:8:thm:Exp3.LB.P} we obtain that 
\begin{align}\label{eq:9:thm:Exp3.LB.P}
\sum_{t=1}^T \bp_t\cdot\tbl_t &= \sum_{t=1}^T s_{I_t,t}(1-x_{I_t,t})+\bp_t\cdot\blb_t = \sum_{t=1}^T l_{I_t,t} +\sum_{t=1}^T (\bp_t\cdot\blb_t - \lb_{I_t,t})-\sum_{t=1}^T x_{I_t,t}s_{I_t,t} \nonumber \\ 
&\geq 
\sum_{t=1}^T l_{I_t,t} 
-\left(\frac{1}{2}q(\blb_{1:T})\log\frac{1}{\delta} \right)^\frac{1}{2}-\beta\sum_{t=1}^T\|\bs_t\|^2-\left(\frac{1}{2}\log\frac{1}{\delta}\sum_{t=1}^T \max_i\{s_{i,t}^2\}\right)^\frac{1}{2}\;.
\end{align}
Next, we bound the term $\sum_{t=1}^T \frac{s'^2_{I_t,t}}{p_{I_t,t}}$ in a similar way. It holds that
$\E_t \frac{s'^2_{I_t,t}}{p_{I_t,t}}\leq\|\bs_t\|^2$, and in addition,
\begin{align*}
\frac{s'^2_{I_t,t}}{p_{I_t,t}}  &= \frac{s^2_{I_t,t}(1-x_{I_t,t})^2}{p_{I_t,t}}\leq \beta^{-1}s_{I_t,t}\leq \beta^{-1}\max_i\{s_{i,t}\}\;,
\end{align*}
where the first inequality is by part (v) of Lemma \ref{lem:correction factor properties}. We thus have w.p. at least $1-\delta$ that 
\begin{equation}\label{eq:10:thm:Exp3.LB.P}
\sum_{t=1}^T \frac{s'^2_{I_t,t}}{p_{I_t,t}} - \sum_{t=1}^T \|\bs_t\|^2 \leq \left(\frac{1}{2}\beta^{-2}\log\frac{1}{\delta}\sum_{t=1}^T \max_i\{s_{i,t}^2\}\right)^\frac{1}{2}\;.
\end{equation}
Finally, by Lemma \ref{modified bcb3.2} we have w.h.p. that $\tL_{k,T} \leq L_{k,T} +\frac{1}{\beta}\log\frac{1}{\delta}$. 
We may combine this bound with Equation \eqref{eq:9:thm:Exp3.LB.P} to yield
\begin{align}\label{eq:11:thm:Exp3.LB.P}
\sum_{t=1}^T \bp_t\cdot \tbl_t - \tL_{k,T}&\geq \sum_{t=1}^T l_{I_t,t}
-\left(\frac{1}{2}q(\blb_{1:T})\log\frac{1}{\delta} \right)^\frac{1}{2}-\beta\sum_{t=1}^T\|\bs_t\|^2-\left(\frac{1}{2}\log\frac{1}{\delta}\sum_{t=1}^T \max_i\{s_{i,t}^2\}\right)^\frac{1}{2}\nonumber\\&-L_{k,T} -\frac{1}{\beta}\log\frac{1}{\delta}
\;.
\end{align}
From \eqref{eq:6:thm:Exp3.LB.P} and \eqref{eq:10:thm:Exp3.LB.P} we also have that 
\begin{align}\label{eq:12:thm:Exp3.LB.P}
\sum_{t=1}^T \bp_t\cdot \tbl_t - \tL_{k,T}&\leq\frac{1}{\eta}\log N + \frac{\eta}{8}\cdot q(\blb_{1:T}) + \frac{\eta}{2} \sum_{t=1}^T \|\bs_t\|^2 + \frac{\eta}{2\beta}\left(\frac{1}{2}\log\frac{1}{\delta}\sum_{t=1}^T \max_i\{s_{i,t}^2\}\right)^\frac{1}{2} \nonumber\\&+\eta \sum_{t=1}^T \max_i\{s_{i,t}\}\cdot\diff(\blb_t)\;.
\end{align}
Combining Equations \eqref{eq:11:thm:Exp3.LB.P} and \eqref{eq:12:thm:Exp3.LB.P} and rearranging, we obtain that for every $k$, 
\begin{align}\label{eq:13:thm:Exp3.LB.P}
\sum_{t=1}^T l_{I_t,t} - L_{k,T} &\leq\
\frac{1}{\eta}\log N + \frac{\eta}{8}\cdot q(\blb_{1:T}) + \frac{\eta}{2} \sum_{t=1}^T \|\bs_t\|^2 +\eta \sum_{t=1}^T \max_i\{s_{i,t}\}\cdot\diff(\blb_t) \nonumber\\&+\beta\sum_{t=1}^T \|\bs_t\|^2+ \frac{1}{\beta}\log\frac{1}{\delta} + \left(1+\frac{\eta}{2\beta}\right)\left(\frac{1}{2}\log\frac{1}{\delta}\sum_{t=1}^T \max_i\{s_{i,t}^2\}\right)^\frac{1}{2}\nonumber\\&+\left(\frac{1}{2}q(\blb_{1:T})\log\frac{1}{\delta} \right)^\frac{1}{2}\;.
\end{align}
We briefly comment that throughout the proof, a total of $N+3$ events occur w.p. at least $1-\delta$. As usual, we may insure that all of them occur simultaneously w.p. at least $1-\delta$ by using $\delta' = \delta/(N+3)$ instead of $\delta$ and invoking the union bound. 

Now, the first line of the r.h.s. of \eqref{eq:13:thm:Exp3.LB.P} is exactly the regret bound of Theorem~\ref{thm:Exp3.LB} and is minimized similarly. The first line thus becomes simply $\sqrt{\ubQ \log N}$. One may also observe that $q(\blb_{1:T}) \leq 2\ubQ$ and $$\sum_{t=1}^T\max_i\{s_{i,t}^2\}\leq \sum_{t=1}^T\|\bs_t\|^2\leq \frac{1}{2}\ubQ\;.$$ Therefore, setting $\beta = \sqrt{\frac{2}{\ubQ}\log\frac{1}{\delta'}}$, we have that
$$\beta\sum_{t=1}^T \|\bs_t\|^2+ \frac{1}{\beta}\log\frac{1}{\delta'} \leq \sqrt{2\ubQ\log\frac{1}{\delta'}}\;.$$
Writing $A$ for Exp3.LB.P, we thus obtain that 
\begin{align}
R_{A,T} &\leq\
\sqrt{\ubQ \log N} + 
\sqrt{2\ubQ\log\frac{1}{\delta'}} +
\left(1+\sqrt{\frac{\log N}{2\log\frac{1}{\delta'}}}\right)\cdot\sqrt{\frac{1}{4}\ubQ\log\frac{1}{\delta'}}
+\sqrt{\ubQ\log\frac{1}{\delta'}}\nonumber\\
&\leq\left(1+\frac{1}{2\sqrt{2}}\right)\cdot\sqrt{\ubQ \log N} + 
\left(\sqrt{2}+\frac{3}{2}\right)\cdot\sqrt{\ubQ\log\frac{1}{\delta'}}\nonumber\\
&= O\left(\sqrt{\ubQ\log\frac{N}{\delta}}\right)\;,
\end{align}
proving part (i).

To avoid the extra assumption that $\beta\max_{i,t}\{s_{i,t}\}\leq 1$, we can set $\beta = \sqrt{\frac{2}{\ubQ}}$. We thus have that if $\max_{i,t}\{s_{i,t}\}=0$ then $\beta\max_{i,t}\{s_{i,t}\}\leq 1$ trivially, and otherwise,
$$\beta \leq \sqrt{\frac{2}{2\sum_t\|\bs_t\|^2}}\leq \sqrt{\frac{1}{\max_{i,t}\{s_{i,t}\}^2}}= \frac{1}{\max_{i,t}\{s_{i,t}\}}\;,$$
as needed. It now holds that 
$$\beta\sum_{t=1}^T \|\bs_t\|^2+ \frac{1}{\beta}\log\frac{1}{\delta'} \leq \sqrt{\frac{1}{2}\ubQ}\cdot\log\frac{e}{\delta'}\;.$$
We then have from \eqref{eq:13:thm:Exp3.LB.P} that
\begin{align}\label{eq:14:thm:Exp3.LB.P}
R_{A,T} &\leq \sqrt{\ubQ\log N}+\sqrt{\frac{1}{2}\ubQ}\cdot\log\frac{e}{\delta'} + \left(1+\sqrt{\frac{1}{2}\log N}\right)\cdot\sqrt{\frac{1}{4}\ubQ\log\frac{1}{\delta'}}
+\sqrt{\ubQ\log\frac{1}{\delta'}}\nonumber\\
&= O\left(\sqrt{\ubQ}\cdot \log\frac{N}{\delta}\right)\;,
\end{align}
yielding part (ii).

For part (iii), we first comment that if $T$ is known, we may always assume that $\ubQ \geq 2T$ (otherwise we use $\max\{\ubQ,2T\}$ instead of $\ubQ$). Next, we note that the condition $\beta \max_{i,t}\{s_{i,t}\}\leq 1$ is satisfied if $\beta \leq 1$, and in particular by setting $\beta = \min\left\{1,\sqrt{(2/\ubQ)\log(1/\delta')}\right\}$. If $\beta < 1$, we have by part (i) that
$R_{A,T} = O\left(\sqrt{\ubQ\log(N/\delta)}\right)$.
Otherwise, $\log(1/\delta')\geq T$, and it follows trivially that 
$R_{A,T} \leq T \leq \sqrt{Q\log(1/\delta')}$. Therefore, in any case it holds w.p. at least $1-\delta$ that 
\begin{align}\label{eq:15:thm:Exp3.LB.P}
R_{A,T} &= O\left(\sqrt{\ubQ\log(N/\delta)}\right)\;.
\end{align}
It is easy to observe that in the bandit case we may use $\ubQ=2NT$, yielding a regret bound of $O\left(\sqrt{NT\log(N/\delta)}\right)$, and in the full-information case we may use $\ubQ=\max\{T/2, 2T\}=2T$, yielding a regret bound of $O\left(\sqrt{T\log(N/\delta)}\right)$, as required.
\end{proof}
\begin{remark}
The assumption that $\beta\max_{i,t}\{ s_{i,t}\} \leq 1$, which was made in part (i) of Theorem~\ref{thm:exp3.lb.p}, is mild. It holds trivially in the full information case, namely, $\max_{i,t}\{ s_{i,t}\}=0$, and otherwise we have that
$$\beta = \sqrt{\frac{2}{\ubQ}\log\frac{N+3}{\delta}}\leq \sqrt{\frac{\log\frac{N+3}{\delta}}{\sum_{t=1}^T \|\bs_t\|^2}}\;.$$
Thus, it holds if $\max_{i,t}\{s_{i,t}\} \ll \sqrt{\sum_{t=1}^T \|\bs_t\|^2}$, namely, if $\{s_{i,t}\}$ is not concentrated on very few indices.
\end{remark}

Finally, we point out that the expression bounded by $\ubQ$ is in fact of the simpler and more interpretable form $\Theta(q(\blb_{1:T})+\sum_{t=1}^T \|\bs_t\|^2)$. This argument has already been made in the context of Theorem~\ref{thm:Exp3.LB} and formalized in Corollary~\ref{cor:simplified Exp3.LB bound}.

\section{Conclusion}\label{sec:conclusion}
In this work we presented an online learning model that unifies and generalizes the full-information and bandit settings. We gave algorithms and analysis for this model, thus providing a single, generalized, framework. We proved regret bounds that are optimal up to logarithmic factors and handled both the expected regret and the high-probability regret regimes.

Our generalization works by modeling partial knowledge of losses as full knowledge of their lower bounds. This is in contrast to works on graph-structured feedback, where partial knowledge is modeled as full knowledge of losses for subsets of experts. In future work it would be interesting to examine a combination of our model with graph-structured feedback. 

On a more technical aspect, it appears that current methods for proving regret lower bounds are not straightforward to apply for scenarios with slightly elaborate constraints on the losses, including in our model. Lower bounds for such scenarios would either strengthen tightness results or help suggest more refined regret upper bounds.

\acks{ 
We acknowledge support by the Israel Science Foundation (Grant No. 534/19), the Ministry of Science and Technology (Grant 3-15621) and by the Ollendorff Minerva Center.
}



\appendix

\section*{Appendix A. Additional Claims}\label{sec:additional claims}

\begin{lemma}\label{lem:hessian of phi}
Let $\bz\in\RR^N$, $\bp_0\in \Delta_N$, define $\Phi(\bz) = -(1/\eta)\log\sum_{j=1}^Np_{j,0}e^{-\eta z_j}$, and denote $\bp=\nabla\Phi(\bz)$. Then $\nabla^{2}\Phi(\bz)=\eta\cdot(\bp\bp^{\top}-diag(\bp))\preceq 0 $. Moreover, for every $\bx\in\RR^N$, it holds that $\bx^\top\nabla^{2}\Phi(\bz)\bx = -\eta Var(Y_{\bp,\bx})$, where $Y_{\bp,\bx}$ is a random variable that satisfies for every $1\leq i\leq N$ that $\Pr(Y_{\bp,\bx}=x_{i})=\sum_{\{j:x_{j}=x_{i}\}}p_{j}$.
\end{lemma}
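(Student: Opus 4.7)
The plan is a direct computation of the gradient and Hessian of $\Phi$, followed by interpretation of the resulting quadratic form as a variance.

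First I would write $S(\bz) = \sum_{j=1}^N p_{j,0} e^{-\eta z_j}$, so that $\Phi(\bz) = -(1/\eta)\log S(\bz)$. Differentiating once gives
$$\frac{\partial \Phi}{\partial z_i} = \frac{p_{i,0} e^{-\eta z_i}}{S(\bz)}\;,$$
so $\bp = \nabla \Phi(\bz)$ is the vector with $i$th component $p_i = p_{i,0} e^{-\eta z_i}/S(\bz)$, which is clearly nonnegative and sums to $1$, i.e.\ $\bp\in\Delta_N$.

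Differentiating once more, the off-diagonal entries become $\partial p_i/\partial z_j = \eta p_i p_j$ for $i\neq j$ (the $S$ in the denominator contributes the sign flip), and the diagonal entries become $\partial p_i/\partial z_i = -\eta p_i + \eta p_i^2 = \eta(p_i^2 - p_i)$. In matrix form this is exactly $\nabla^2\Phi(\bz) = \eta(\bp\bp^\top - diag(\bp))$, establishing the first identity.

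For the quadratic form, I would expand
$$\bx^\top \nabla^2\Phi(\bz)\bx = \eta\Bigl((\bp^\top\bx)^2 - \sum_{i=1}^N p_i x_i^2\Bigr) = -\eta\Bigl(\sum_{i=1}^N p_i x_i^2 - \bigl(\sum_{i=1}^N p_i x_i\bigr)^2\Bigr)\;.$$
The bracketed expression is $\E[Y_{\bp,\bx}^2] - (\E Y_{\bp,\bx})^2 = Var(Y_{\bp,\bx})$ for any random variable obtained by placing mass $p_i$ on $x_i$; aggregating equal values (as in the statement) leaves both moments unchanged. Hence $\bx^\top\nabla^2\Phi(\bz)\bx = -\eta Var(Y_{\bp,\bx}) \leq 0$, which simultaneously gives $\nabla^2\Phi(\bz)\preceq 0$. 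There is no real obstacle here—the proof is essentially a two-line derivative computation; the only thing to be careful about is the sign bookkeeping from the $-(1/\eta)\log$ outer function and the verification that grouping equal $x_i$ values does not affect the variance.
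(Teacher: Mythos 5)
Your computation is correct, and it is the standard argument: the paper itself does not prove this lemma but defers to \citet[Lemma~6]{gofer2016lower}, whose proof is essentially this same direct differentiation of the log-sum-exp potential and identification of the quadratic form with a variance. Your sign bookkeeping, the Hessian entries $\eta p_i p_j$ (off-diagonal) and $\eta(p_i^2-p_i)$ (diagonal), and the observation that aggregating equal values of $x_i$ leaves both moments unchanged are all accurate, so nothing is missing.
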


\begin{lemma}\textup{(Popoviciu's inequality)}\label{lem:Popoviciu-inequality}
If $X$ is a bounded random variable with values
in $[m,M]$, then $Var(X)\leq(M-m)^{2}/4$, with equality
iff $\Pr(X=M)=\Pr(X=m)=1/2$.
\end{lemma}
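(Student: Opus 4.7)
The plan is to prove Popoviciu's inequality by centering the random variable at the midpoint of its range and exploiting the elementary identity that the variance is the minimum over all constants $c$ of $\E[(X-c)^2]$.

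First, I would recall that for any constant $c\in\RR$,
\begin{equation*}
Var(X) \;=\; \E[(X-\E X)^2] \;\leq\; \E[(X-c)^2],
\end{equation*}
with equality iff $c=\E X$. This follows immediately from expanding $\E[(X-c)^2] = Var(X) + (\E X - c)^2$.

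Next, I would choose $c = (m+M)/2$, the midpoint of the interval $[m,M]$. Since $X\in[m,M]$ almost surely, we have
\begin{equation*}
|X - c| \;\leq\; \tfrac{M-m}{2} \quad \text{a.s.},
\end{equation*}
so $(X-c)^2 \leq (M-m)^2/4$ almost surely. Taking expectations and combining with the first step gives
\begin{equation*}
Var(X) \;\leq\; \E[(X-c)^2] \;\leq\; \tfrac{(M-m)^2}{4},
\end{equation*}
which is the desired inequality.

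For the equality case, equality in the first step forces $\E X = (m+M)/2$, and equality in the second step (an a.s. bound on a nonnegative random variable) forces $(X-c)^2 = (M-m)^2/4$ almost surely, i.e.\ $X\in\{m,M\}$ almost surely. Writing $p = \Pr(X=M)$, the constraint $\E X = (m+M)/2$ yields $pM + (1-p)m = (m+M)/2$, hence $p = 1/2$, giving $\Pr(X=M) = \Pr(X=m) = 1/2$. Conversely, a direct computation shows this distribution achieves the bound. I do not anticipate any real obstacle here; the only subtlety is being careful to handle both directions of the equality characterization, which the decomposition into two inequalities makes transparent.
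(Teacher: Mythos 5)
Your proof is correct: the decomposition $\E[(X-c)^2] = Var(X) + (\E X - c)^2$ with $c=(m+M)/2$, the almost-sure bound $(X-c)^2 \leq (M-m)^2/4$, and the two-way equality analysis are all sound, and this is the standard argument for Popoviciu's inequality. Note that the paper itself states this lemma in Appendix~A without proof, treating it as a classical fact, so there is no in-paper argument to compare against; the only (cosmetic) caveat in your write-up is the degenerate case $m=M$, where the stated equality characterization is vacuous, but the paper's own statement glosses over this as well.
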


\vskip 0.2in


\begin{thebibliography}{22}
\providecommand{\natexlab}[1]{#1}
\providecommand{\url}[1]{\texttt{#1}}
\expandafter\ifx\csname urlstyle\endcsname\relax
  \providecommand{\doi}[1]{doi: #1}\else
  \providecommand{\doi}{doi: \begingroup \urlstyle{rm}\Url}\fi

\bibitem[Alon et~al.(2013)Alon, Cesa-Bianchi, Gentile, and
  Mansour]{alon2013bandits}
Noga Alon, Nicol{\`{o}} Cesa-Bianchi, Claudio Gentile, and Yishay Mansour.
\newblock From bandits to experts: A tale of domination and independence.
\newblock In \emph{Advances in Neural Information Processing Systems}, pages
  1610--1618, 2013.

\bibitem[Alon et~al.(2015)Alon, Cesa-Bianchi, Dekel, and Koren]{alon2015online}
Noga Alon, Nicol{\`{o}} Cesa-Bianchi, Ofer Dekel, and Tomer Koren.
\newblock Online learning with feedback graphs: Beyond bandits.
\newblock In \emph{Annual Conference on Learning Theory}, volume~40. Microtome
  Publishing, 2015.

\bibitem[Alon et~al.(2017)Alon, Cesa-Bianchi, Gentile, Mannor, Mansour, and
  Shamir]{alon2017nonstochastic}
Noga Alon, Nicol{\`{o}} Cesa-Bianchi, Claudio Gentile, Shie Mannor, Yishay
  Mansour, and Ohad Shamir.
\newblock Nonstochastic multi-armed bandits with graph-structured feedback.
\newblock \emph{SIAM Journal on Computing}, 46\penalty0 (6):\penalty0
  1785--1826, 2017.

\bibitem[Audibert and Bubeck(2010)]{DBLP:journals/jmlr/AudibertB10}
Jean-Yves Audibert and S{\'e}bastien Bubeck.
\newblock Regret bounds and minimax policies under partial monitoring.
\newblock \emph{Journal of Machine Learning Research}, 11:\penalty0 2785--2836,
  2010.

\bibitem[Auer et~al.(2002)Auer, Cesa-Bianchi, Freund, and
  Schapire]{AuerCeFrSc02}
Peter Auer, Nicol\`o Cesa-Bianchi, Yoav Freund, and Robert~E. Schapire.
\newblock The nonstochastic multiarmed bandit problem.
\newblock \emph{SIAM Journal on Computing}, 32\penalty0 (1):\penalty0 48--77,
  2002.

\bibitem[Bubeck and Cesa-Bianchi(2012)]{banditSurvey}
S{\'e}bastien Bubeck and Nicol{\`o} Cesa-Bianchi.
\newblock Regret analysis of stochastic and nonstochastic multi-armed bandit
  problems.
\newblock \emph{Foundations and Trends in Machine Learning}, 5\penalty0
  (1):\penalty0 1--122, 2012.

\bibitem[Bubeck et~al.(2018)Bubeck, Cohen, and Li]{bubeck2018sparsity}
S{\'e}bastien Bubeck, Michael Cohen, and Yuanzhi Li.
\newblock Sparsity, variance and curvature in multi-armed bandits.
\newblock In \emph{Algorithmic Learning Theory}, pages 111--127. PMLR, 2018.

\bibitem[Cesa-Bianchi and Lugosi(2006)]{BookCBL}
Nicol{\`{o}} Cesa-Bianchi and G{\'{a}}bor Lugosi.
\newblock \emph{Prediction, Learning, and Games}.
\newblock Cambridge University Press, 2006.

\bibitem[Cesa-Bianchi and Shamir(2017)]{cesa2017bandit}
Nicol{\`o} Cesa-Bianchi and Ohad Shamir.
\newblock Bandit regret scaling with the effective loss range.
\newblock \emph{arXiv preprint arXiv:1705.05091}, 2017.

\bibitem[Cesa-Bianchi et~al.(2007)Cesa-Bianchi, Mansour, and Stoltz]{CMS}
Nicol{\`{o}} Cesa-Bianchi, Yishay Mansour, and Gilles Stoltz.
\newblock Improved second-order bounds for prediction with expert advice.
\newblock \emph{Machine Learning}, 66\penalty0 (2-3):\penalty0 321--352, 2007.

\bibitem[Chiang et~al.(2012)Chiang, Yang, Lee, Mahdavi, Lu, Jin, and Zhu]{c+12}
Chao-Kai Chiang, Tianbao Yang, Chia-Jung Lee, Mehrdad Mahdavi, Chi-Jen Lu, Rong
  Jin, and Shenghuo Zhu.
\newblock Online optimization with gradual variations.
\newblock \emph{Journal of Machine Learning Research - Proceedings Track},
  23:\penalty0 6.1--6.20, 2012.

\bibitem[Freund and Schapire(1997)]{FreundSc95}
Yoav Freund and Robert~E Schapire.
\newblock A decision-theoretic generalization of on-line learning and an
  application to boosting.
\newblock \emph{Journal of Computer and System Sciences}, 55\penalty0
  (1):\penalty0 119--139, 1997.

\bibitem[Gofer(2014)]{Gofer14}
Eyal Gofer.
\newblock Higher-order regret bounds with switching costs.
\newblock \emph{Journal of Machine Learning Research - Proceedings Track},
  35:\penalty0 210--243, 2014.

\bibitem[Gofer and Mansour(2016)]{gofer2016lower}
Eyal Gofer and Yishay Mansour.
\newblock Lower bounds on individual sequence regret.
\newblock \emph{Machine Learning}, 103\penalty0 (1):\penalty0 1--26, 2016.

\bibitem[Hazan and Kale(2010)]{Hazan10}
Elad Hazan and Satyen Kale.
\newblock Extracting certainty from uncertainty: regret bounded by variation in
  costs.
\newblock \emph{Machine Learning}, 80\penalty0 (2-3):\penalty0 165--188, 2010.

\bibitem[Hazan and Kale(2011)]{DBLP:journals/jmlr/HazanK11}
Elad Hazan and Satyen Kale.
\newblock Better algorithms for benign bandits.
\newblock \emph{Journal of Machine Learning Research}, 12:\penalty0 1287--1311,
  2011.

\bibitem[Koc{\'a}k et~al.(2014)Koc{\'a}k, Neu, Valko, and
  Munos]{kocak2014efficient}
Tom{\'a}{\v{s}} Koc{\'a}k, Gergely Neu, Michal Valko, and R{\'e}mi Munos.
\newblock Efficient learning by implicit exploration in bandit problems with
  side observations.
\newblock In \emph{Advances in Neural Information Processing Systems}, pages
  613--621, 2014.

\bibitem[Lattimore and Szepesv{\'a}ri(2020)]{lattimore2020bandit}
Tor Lattimore and Csaba Szepesv{\'a}ri.
\newblock \emph{Bandit algorithms}.
\newblock Cambridge University Press, 2020.

\bibitem[Littlestone and Warmuth(1994)]{LW94}
Nick Littlestone and Manfred~K Warmuth.
\newblock The weighted majority algorithm.
\newblock \emph{Information and Computation}, 108:\penalty0 212--261, 1994.

\bibitem[Mannor and Shamir(2011)]{mannor2011bandits}
Shie Mannor and Ohad Shamir.
\newblock From bandits to experts: On the value of side-observations.
\newblock In \emph{Advances in Neural Information Processing Systems}, pages
  684--692, 2011.

\bibitem[Slivkins(2019)]{slivkins2019introduction}
Aleksandrs Slivkins.
\newblock Introduction to multi-armed bandits.
\newblock \emph{Foundations and Trends in Machine Learning}, 12\penalty0
  (1-2):\penalty0 1--286, 2019.

\bibitem[Vovk(1990)]{Vov90}
Vladimir Vovk.
\newblock Aggregating strategies.
\newblock In \emph{Proceedings of the 3rd Annual Workshop on Computational
  Learning Theory}, pages 371--383, 1990.

\end{thebibliography}
\end{document}